\providecommand{\algorithmname}{Algorithm}
\theoremstyle{plain}
\newtheorem{lemma}{\textbf{Lemma}}
\newtheorem{theorem}{\textbf{Theorem}}\setcounter{theorem}{0}
\newtheorem{definition}{\textbf{Definition}}
\theoremstyle{definition}
\definecolor{cm}{RGB}{0,0,200}
\definecolor{yy}{RGB}{148,0,211}
\newcommand{\NEgap}{\mathrm{NE\text{-}gap}}
\begin{document}
\title{$O(T^{-1})$ Convergence of Optimistic-Follow-the-Regularized-Leader\\
in Two-Player Zero-Sum Markov Games }
\author{{Yuepeng Yang\thanks{Department of Statistics, University of Chicago; Email: \texttt{\{yuepengyang,
congm\}@uchicago.edu}}} \and{Cong Ma\footnotemark[1]}}

\maketitle
\begin{abstract}
We prove that optimistic-follow-the-regularized-leader (OFTRL), together
with smooth value updates, finds an $O(T^{-1})$-approximate Nash
equilibrium in $T$ iterations for two-player zero-sum Markov games
with full information. This improves the $\tilde{O}(T^{-5/6})$ convergence
rate recently shown in the paper~\cite{zhang2022policy}. The refined
analysis hinges on two essential ingredients. First, the sum of the
regrets of the two players, though not necessarily non-negative as
in normal-form games, is approximately non-negative in Markov games.
This property allows us to bound the second-order path lengths of
the learning dynamics. Second, we prove a tighter algebraic inequality
regarding the weights deployed by OFTRL that shaves an extra $\log T$
factor. This crucial improvement enables the inductive analysis that
leads to the final $O(T^{-1})$ rate. 
\end{abstract}

\section{Introduction}

Multi-agent reinforcement learning (MARL)~\cite{busoniu2008comprehensive,zhang2021multi}
models sequential decision-making problems in which multiple agents/players
interact with each other in a shared environment. MARL has recently
achieved tremendous success in playing games~\cite{vinyals2019grandmaster,berner2019dota,brown2019superhuman},
which, consequently, has spurred a growing body of work on MARL; see~\cite{yang2020overview}
for a recent overview. 

A widely adopted mathematical model for MARL is the so-called Markov
games~\cite{shapley1953stochastic,littman1994markov}, which combines
normal-form games~\cite{nash1951non} with Markov decision processes~\cite{puterman2014markov}.
In a nutshell, a Markov game starts with a certain state, followed
by actions taken by the players. The players then receive their respective
payoffs, as in a normal-form game, and at the same time the system
transits to a new state as in a Markov decision process. The whole
process repeats. As in normal-form games, the goal for each player
is to maximize her own cumulative payoffs. We defer the precise descriptions
of Markov games to Section~\ref{sec:Preliminaries}. 

In the simpler normal-form games, no-regret learning~\cite{cesa2006prediction}
has long been used as an effective method to achieve competence in
the multi-agent environment. Take the two-player zero-sum normal-form
game as an example. It is easy to show that standard no-regret algorithms
such as follow-the-regularized-leader (FTRL) reach an $O(T^{-1/2})$-approximate
Nash equilibrium~\cite{nash1951non} in $T$ iterations. Surprisingly,
the seminal paper~\cite{daskalakis2011near} demonstrates that a
special no-regret algorithm, built upon Nesterov's excessive gap technique~\cite{nesterov2005excessive},
achieves a faster and optimal $\tilde{O}(T^{-1})$ rate of convergence
to the Nash equilibrium. This nice and fast convergence was later
established for optimistic variants of mirror descent~\cite{rakhlin2013optimization}
and FTRL~\cite{syrgkanis2015fast}. Since then, a flurry of research
\cite{chen2020hedging,daskalakis2021near,anagnostides2022near,anagnostides2022uncoupled,farina2022near}
has been conducted around optimistic no-regret learning algorithms
to obtain faster rate of convergence in normal-form games. 

In contrast, research on the fast convergence of optimistic no-regret
learning in Markov games has been scarce. In this paper, we focus
on two-player zero-sum Markov games---arguably the simplest Markov
game. \cite{zhang2022policy} recently initiated the
study of the optimistic-follow-the-regularized-leader (OFTRL) algorithm
in such a setting and proved that OFTRL converges to an $\tilde{O}(T^{-5/6})$-approximate
Nash equilibrium after $T$ iterations. In light of the faster $O(T^{-1})$
convergence of optimistic algorithms in normal-form games, it is natural
to ask 

{ 
\begin{itemize}
\item[] \emph{After $T$ iterations, can OFTRL find an $O(T^{-1})$-approximate
Nash equilibrium in two-player zero-sum Markov games? }
\end{itemize}
}

\noindent In fact, this question has also been raised by \cite{zhang2022policy} in the Discussion section. More promisingly,
they have verified the fast convergence (i.e., $O(T^{-1})$) of OFTRL
in a simple two-stage Markov game; see Fig.~1 therein.

Our main contribution in this work is to answer this question \emph{affirmatively},
through improving the $\tilde{O}(T^{-5/6})$ rate demonstrated in
\cite{zhang2022policy} to the optimal $O(T^{-1})$ rate. The improved
rate for OFTRL arises from two technical contributions. The first
is the approximate non-negativity of the sum of the regrets of the
two players in Markov games. In particular, the sum is lower bounded
by the negative estimation error of the optimal $Q$-function; see
Lemma~\ref{lemma:sum_regret_lb} for the precise statement. This
is in stark contrast to the two-player zero-sum normal-form game~\cite{anagnostides2022last}
and the multi-player general-sum normal-form game~ \cite{anagnostides2022uncoupled},
in which by definition, the sum of the external/swap regrets are non-negative.
This approximate non-negativity proves crucial for us to control the
second-order path length of the learning dynamics induced by OFTRL.
In a different context---time-varying zero-sum normal-form games, 
\cite{zhang2022no} also utilizes a sort of approximate non-negativity of the sum of 
the regrets. 
However, the source of this gap from non-negativity is different: in \cite{zhang2022no} it arises from the time-varying nature 
of the zero-sum game, while in our case with Markov games, it comes from 
the estimation error of the equilibrium pay-off matrix by the algorithm itself. 

Secondly, central to the analysis in finite-horizon Markov decision processes
(and also Markov games) is the induction across the horizon. In our
case, in order to carry out the induction step, we prove a tighter
algebraic inequality related to the weights deployed by OFTRL; see
Lemma~\ref{lemma:our-lemma}. In particular, we shave an extra $\log T$
factor. Surprisingly, this seemingly harmless $\log T$ factor is
the key to enabling the above-mentioned induction analysis, and as
a by-product, removes the extra $\log$ factor in the performance
guarantee of OFTRL.

Note that as an imperfect remedy, \cite{zhang2022policy} proposed a modified OFTRL algorithm 
that achieves $\tilde{O}(T^{-1})$ convergence to Nash equilibrium. However, compared to the vanilla OFTRL algorithm considered herein, 
the modified version tracks two $Q$-functions, adopts a different $Q$-function update procedure that can be more costly in certain scenarios, and more importantly diverges from the general policy optimization framework proposed in \cite{zhang2022policy}. Our work bridges these gaps by establishing the fast 
convergence for the vanilla OFTRL. 

Another line of algorithms used for solving Nash equilibrium is based on dynamic programming \cite{perolat2015approximate,zhang2022policy,cen2021fast}. 
Unlike the single-loop structure of OFTRL, the dynamic programming approach requires a nested loop, with the outer-loop iterating over the horizons 
and the inner-loops solving a sub-game through iterations. 
This requires more tuning parameters, one set for each subproblem/layer. Such kind of extra tuning was documented in \cite{cen2021fast}.
The nested nature of dynamic programming also demands one to predetermine a precision $\epsilon$ and estimate the sub-game at each horizon to precision $\epsilon/H$.
This is less convenient in practice compared to a single-loop algorithm like the OFTRL we study, where such predetermined precision is not necessary. 
Another recent paper \cite{cen2022faster} also discusses the advantages of single-loop algorithms over those with nested loops.

\subsection{Related work}

\paragraph*{Optimistic no-regret learning in games. }

Our work is mostly related to the line of work on proving fast convergence
of optimistic no-regret algorithms in various forms of games. \cite{daskalakis2011near} provide the first fast algorithm
that reaches a Nash equilibrium at an $\tilde{O}(T^{-1})$ rate in
two-player zero-sum normal-form games. Later, with the same setup,
\cite{rakhlin2013optimization} prove a similar
fast convergence for optimistic mirror descent (OMD). \cite{syrgkanis2015fast} extend
the results to multi-player general-sum normal-form games. In addition,
Syrgkanis et al.~show that when all the players adopt optimistic
algorithms, their \emph{individual} regret is at most $O(T^{-3/4})$.
This is further improved to $O(T^{-5/6})$ in the special two-player
zero-sum case~\cite{chen2020hedging}. More recently, via a detailed
analysis of higher-order smoothness, \cite{daskalakis2021near,anagnostides2022near}
manage to improve the individual regret guarantee of optimistic hedge
to $\tilde{O}(T^{-1})$ in multi-player general-sum normal-form games,
matching the result in the two-player case. A similar result is shown
by \cite{anagnostides2022uncoupled} with a different analysis using
self-concordant barriers as the regularizer. 

Several attempts have been made to extend the results on optimistic
no-regret learning in normal-form games to Markov games. \cite{wei2021last}
design a decentralized algorithm based on optimistic gradient descent~/~ascent
that converges to a Nash equilibrium at an $\tilde{O}(T^{-1/2})$
rate. Closest to us is the work by \cite{zhang2022policy}
which shows an $\tilde{O}(T^{-5/6})$ convergence of OFTRL to the
Nash equilibrium in two-player zero-sum Markov games and an $\tilde{O}(T^{-3/4})$
convergence to a coarse correlated equilibrium in multi-player general-sum
Markov games. Most recently, \cite{erez2022regret} prove
an $O(T^{-1/4})$ individual regret for OMD in multi-player general-sum
Markov games.

\paragraph*{Two-player zero-sum Markov games. }

Our work also fits into the study of two-player zero-sum Markov games~\cite{shapley1953stochastic,littman1994markov}.
Various algorithms~\cite{hu2003nash,littman1994markov,zhao2021provably,cen2021fast}
have been proposed in the full information setting, where one assumes
the players have access to the \emph{exact} state-action value functions.
In particular, \cite{zhao2021provably,cen2021fast} use optimistic approaches for normal-form games as subroutines
to extend the $\tilde{O}(T^{-1})$ convergence rates to two-player zero-sum Markov games. 
In particular, they provide last iterate convergence guarantees as well. 
However, in doing so, their algorithms require one to approximately solve a normal-form game in each iteration. 

In the bandit setting, \cite{bai2020provable,xie2020learning,bai2020near,liu2021sharp,zhang2020model}
study the sample complexity of two-player zero-sum Markov games. In
addition, \cite{sidford2020solving,jia2019feature,zhang2020model,li2022minimax}
investigate the sample complexity under a generative model where one
can query the Markov game at arbitrary states and actions. Last but not
least, recently two-player zero-sum Markov games have been studied
in the offline setting~\cite{cui2022offline,yan2022model}, where
the learner is given a set of historical data, and cannot interact
with Markov games further.

\section{Preliminaries \label{sec:Preliminaries}}

This section provides the necessary background on Markov games and
optimistic-follow-the-regularized-leader (OFTRL). 

\paragraph{Two-player zero-sum Markov games. }

Denote by $\mathcal{MG}(H,\mathcal{S},\mathcal{A},\mathcal{B},\mathbb{P},r)$
a finite-horizon time-inhomogeneous two-player zero-sum Markov game, with
$H$ the horizon, $\mathcal{S}$ the state space, $\mathcal{A}$ (resp.~$\mathcal{B}$)
the action space for the max-player (resp.~min-player), $\mathbb{P}=\left\{ \mathbb{P}_{h}\right\} _{h\in[H]}$
the transition probabilities, and $r=\left\{ r_{h}\right\} _{h\in[H]}$
the reward function. We assume state space $\mathcal{S}$ and action spaces $\mathcal{A},\mathcal{B}$ to be finite 
and have size $S,A,B$, respectively, and $r_{h}$ takes value in $[0,1]$.
Without loss of generality, we assume that the game starts at a fixed
state $s_{1}\in\mathcal{S}$. Then at each step $h$, both players
observe the current state $s_{h}\in\mathcal{S}$. The max-player picks
an action $a_{h}\in\mathcal{A}$ and the min-player picks an action
$b_{h}\in\mathcal{B}$ simultaneously. Then the max-player (resp.~min-player)
receives the reward $r_{h}(s_{h},a_{h},b_{h})$ (resp.~$-r_{h}(s_{h},a_{h},b_{h})$),
and the game transits to step $h+1$ with the next state $s_{h+1}$
sampled from $\mathbb{P}_{h}(\cdot\mid s_{h},a_{h},b_{h})$. The game
ends after $H$ steps. The goal for the max-player is to maximize
her total reward while the min-player seeks to minimize the total
reward obtained by the max-player. 

\paragraph{Markov policies and value functions.}

Let $\mu=\left\{ \mu_{h}\right\} _{h\in[H]}$ be the Markov policy
for the max-player, where $\mu_{h}(\cdot\mid s)\in\Delta_{\mathcal{A}}$
is the distribution of actions the max-player picks when seeing state
$s$ at step $h$. Here, $\Delta_{\mathcal{X}}$ denotes the set of
all probability distributions on the space $\mathcal{X}$. Similarly,
the min-player is equipped with a Markov policy $\nu=\left\{ \nu_{h}\right\} _{h\in[H]}$.
We define the value function of the policy pair $(\mu,\nu)$ at step
$h$ to be 
\[
V_{h}^{\mu,\nu}(s)\coloneqq\mathbb{E}_{\mu,\nu}\left[\sum_{i=h}^{H}r(s_{i},a_{i},b_{i})\mid s_{h}=s\right],
\]
where the expectation is taken w.r.t.~the policies $\{\mu_{i},\nu_{i}\}_{i\geq h}$
and the state transitions $\{\mathbb{P}_{i}\}_{i\geq h}$. Similarly,
one can define the $Q$-function as
\[
Q_{h}^{\mu,\nu}(s,a,b)\coloneqq\mathbb{E}_{\mu,\nu}\left[\sum_{i=h}^{H}r(s_{i},a_{i},b_{i})\mid s_{h}=s,a_{h}=a,b_{h}=b\right].
\]
In words, both functions represent the expected future rewards received
by the max-player given the current state or state-action pair. 

\paragraph{Best responses and Nash equilibria. }

Fix a Markov policy $\nu$ for the min-player. There exists a Markov
policy $\mu^{\dagger}(\nu)$ (a.k.a.~best response) such that for
any $s\in\mathcal{S}$ and $h\in[H]$,
\[
V_{h}^{\mu^{\dagger}(\nu),\nu}(s)=\sup_{\mu^{\dagger}}V_{h}^{\mu^{\dagger},\nu}(s),
\]
where the supremum is taken over all Markov policies.  To simplify
the notation, we denote $V_{h}^{\dagger,\nu}(s)\coloneqq V_{h}^{\mu^{\dagger}(\nu),\nu}(s)$.
Similarly, we can define $V_{h}^{\mu,\dagger}(s)$. It is known that
a pair $(\mu^{\star},\nu^{\star})$ of Markov policies exists and
$\mu^{\star},\nu^{\star}$ are best responses to the other, i.e.,
$V_{h}^{\mu^{\star},\nu^{\star}}(s)=V_{h}^{\dagger,\nu^{\star}}(s)=V_{h}^{\mu^{\star},\dagger}(s)$
for all $s\in\mathcal{S}$ and $h\in[H]$. Such a pair $(\mu^{\star},\nu^{\star})$
is called a Nash equilibrium (NE). We may denote the value function
and $Q$-function under any Nash equilibrium $(\mu^{\star},\nu^{\star})$
as
\[
V_{h}^{\star}\coloneqq V_{h}^{\mu^{\star},\nu^{\star}},\qquad Q_{h}^{\star}\coloneqq Q_{h}^{\mu^{\star},\nu^{\star}},
\]
which are known to be unique even if there are multiple Nash equilibria~\cite{shapley1953stochastic}.
The goal of learning in two-player zero-sum Markov games is to find
an $\varepsilon$-approximation to the NE defined as follows.

\begin{definition}[$\varepsilon$-approximate Nash equilibrium]Fix
any approximation accuracy $\varepsilon>0$. A pair $(\mu,\nu)$ of
Markov policies is an $\varepsilon$-approximate Nash equilibrium
if 
\begin{equation}
\mathrm{\NEgap(\mu,\nu)}\coloneqq V_{1}^{\dagger,\nu}(s_{1})-V_{1}^{\mu,\dagger}(s_{1})\leq\varepsilon.\label{def:NEGap}
\end{equation}
\end{definition}

\paragraph{An interlude: additional notations. }

Before explaining OFTRL, we introduce some additional notations to
simplify things hereafter. Fix any $h\in[H]$, $s\in\mathcal{S}$.
For any function $Q:\mathcal{S}\times\mathcal{A}\times\mathcal{B}\rightarrow\mathbb{R}$,
we may consider $Q(s,\cdot,\cdot)$ to be an $A\times B$ matrix and
$\mu_{h}(\cdot\mid s),\nu_{h}(\cdot\mid s)$ to be vectors of length
$A$ and $B$, respectively. Then for any policy $(\mu_{h},\nu_{h})$
at horizon $h$ we may define
\begin{align*}
\left[\mu_{h}^{\top}Q\nu_{h}\right](s) & \coloneqq\mathbb{E}_{a\sim\mu_{h}(\cdot\mid s),b\sim\nu_{h}(\cdot\mid s)}[Q(s,a,b)],\\
\left[\mu_{h}^{\top}Q\right](s,\cdot) & \coloneqq\mathbb{E}_{a\sim\mu_{h}(\cdot\mid s)}[Q(s,a,\cdot)],\\
\left[Q\nu_{h}\right](s,\cdot) & \coloneqq\mathbb{E}_{b\sim\nu_{h}(\cdot\mid s)}[Q(s,\cdot,b)].
\end{align*}
The term $\left[\mu_{h}^{\top}Q\nu_{h}\right](s)$ can also be written
in the inner product form $\left\langle \mu_{h},Q\nu_{h}\right\rangle (s)$
or $\left\langle \nu_{h},Q^{\top}\mu_{h}\right\rangle (s)$. It is
easy to check that for fixed $s$ and $h$, the left hand sides of
these definitions are standard matrix operations. In addition, for
any $V:\mathcal{S}\mapsto\mathbb{R}$, we define the shorthand
\[
\left[\mathbb{P}_{h}V\right](s,a,b)\coloneqq\mathbb{E}_{s'\sim\mathbb{P}_{h}(\cdot\mid s,a,b)}[V(s')],
\]
which allows us to rewrite Bellman updates of $V$ and $Q$ as 
\[
V_{h}^{\mu,\nu}(s)=\left[\mu_{h}^{\top}Q_{h}^{\mu,\nu}\nu_{h}\right](s),
\]
\[
Q_{h}^{\mu,\nu}(s,a,b)=r_{h}(s,a,b)+\left[\mathbb{P}_{h}V_{h+1}^{\mu,\nu}\right](s,a,b).
\]

\begin{algorithm}
    \caption{\label{alg:main}Optimistic-follow-the-regularized-leader for solving
    two-player zero-sum Markov games}
    
    \textbf{Input: }Stepsize $\eta$, reward function $r$, probability
    transition function $\mathbb{P}$.
    
    \textbf{Initialization:} $Q_{h}^{0}\equiv0$ for all $h\in[H]$.
    
    \textbf{For iteration $1$ to $T$, do}
    \begin{itemize}
    \item \textbf{Policy Update:} for all state $s\in\mathcal{S}$, horizon
    $h\in[H]$, \begin{subequations}\label{subeq:policy-update}
    \begin{align}
    \mu_{h}^{t}(a & \mid s)\propto\exp\left(\frac{\eta}{w_{t}}\left[\sum_{i=1}^{t-1}w_{i}\left[Q_{h}^{i}\nu_{h}^{i}\right](s,a)+w_{t}\left[Q_{h}^{t-1}\nu_{h}^{t-1}\right](s,a)\right]\right),\label{eq:max-update}\\
    \nu_{h}^{t}(b\mid s) & \propto\exp\left(-\frac{\eta}{w_{t}}\left[\sum_{i=1}^{t-1}w_{i}\left[(Q_{h}^{i})^{\top}\mu_{h}^{i}\right](s,b)+w_{t}\left[(Q_{h}^{t-1})^{\top}\mu_{h}^{t-1}\right](s,b)\right]\right).\label{eq:min-update}
    \end{align}
    \end{subequations}
    \item \textbf{Value Update:} for all $s\in\mathcal{S},a\in\mathcal{A},b\in\mathcal{B}$,
    from $h=H$ to 1,
    \begin{equation}
    Q_{h}^{t}(s,a,b)=(1-\alpha_{t})Q_{h}^{t-1}(s,a,b)+\alpha_{t}\left(r_{h}+\mathbb{P}_{h}\left[(\mu_{h+1}^{t})^{\top}Q_{h+1}^{t}\nu_{h+1}^{t}\right]\right)(s,a,b),\label{eq:Q-update}
    \end{equation}
    \end{itemize}
    \textbf{Output average policy: }for all $s\in\mathcal{S},h\in[H]$
    \begin{equation}
    \hat{\mu}_{h}(\cdot\mid s)\coloneqq\sum_{t=1}^{T}\alpha_{T}^{t}\mu_{h}^{t}(\cdot\mid s),\quad\hat{\nu}_{h}(\cdot\mid s)\coloneqq\sum_{t=1}^{T}\alpha_{T}^{t}\nu_{h}^{t}(\cdot\mid s).\label{eq:output-policy}
    \end{equation}
\end{algorithm}

\paragraph{Optimistic-follow-the-regularized-leader. }

Now we are ready to introduce the optimistic-follow-the-regularized-leader
(OFTRL) algorithm for solving two-player zero-sum Markov games, which
has appeared in the paper by \cite{zhang2022policy}. See Algorithm~\ref{alg:main}
for the full specification. 

In a nutshell, the algorithm has three main components. The first
is the policy update~(\ref{subeq:policy-update}) using \emph{weighted
OFTRL} for both the max and min players. As one can see, compared
to the standard follow-the-regularized-leader algorithm, the \emph{weighted
OFTRL} adds a loss predictor $[Q^{t-1}\nu^{t-1}](s,a)$ and deploys
a weighted update according to the weights $\{w_{i}\}_{1\le i\le t}$,
which we shall define momentarily. The second component is the backward
value update~(\ref{eq:Q-update}) using weighted average of the previous
estimates and the Bellman updates. The last essential part is outputting
a weighted policy~(\ref{eq:output-policy}) over all the historical
policies. As one can realize, weights play a big role in specifying
the OFTRL algorithm. In particular, we set 
\begin{equation}
\alpha_{t}\coloneqq\frac{H+1}{H+t},\quad\alpha_{t}^{t}\coloneqq\alpha_{t},\quad\alpha_{t}^{i}\coloneqq\alpha_{i}\prod_{j=i+1}^{t}(1-\alpha_{j}),\quad w_{i}\coloneqq\frac{\alpha_{t}^{i}}{\alpha_{t}^{1}}=\frac{\alpha_{i}}{\alpha_{1}\prod_{j=2}^{i}(1-\alpha_{j})},\label{eq:alpha-def}
\end{equation}
which are the same choices as in the paper by \cite{zhang2022policy}. 

\section{Main result and overview of the proof}

With the preliminaries in place, we are in a position to state our
main result for OFTRL in two-player zero-sum Markov games. 

\begin{theorem}\label{thm:main}Consider Algorithm~\ref{alg:main}
with $\eta=C_{\eta}H^{-2}$ for some constant $C_{\eta}\le1/8$. The
output policy pair $(\hat{\mu},\hat{\nu})$ satisfies 
\[
\NEgap(\hat{\mu},\hat{\nu})\le\frac{320C_{\eta}^{-1}H^{5}\cdot\log(AB)}{T}.
\]
\end{theorem}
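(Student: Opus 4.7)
The plan is to reduce $\NEgap(\hat\mu,\hat\nu)$ to a sum of weighted regrets at each horizon, bound those regrets via the RVU (Regret bounded by Variation in Utility) property of OFTRL, and then run a backward induction on $h$ that uses Lemma~\ref{lemma:sum_regret_lb} to cancel positive second-order path lengths and Lemma~\ref{lemma:our-lemma} to keep the resulting rate free of extra $\log T$ factors. First I would write
\[
\NEgap(\hat\mu,\hat\nu) = \bigl(V_1^{\dagger,\hat\nu}(s_1) - V_1^\star(s_1)\bigr) + \bigl(V_1^\star(s_1) - V_1^{\hat\mu,\dagger}(s_1)\bigr),
\]
and, using the consistency between the $\alpha_T^t$-weighting of the output policies and the $\alpha_t$-weighting of the $Q$-updates in~\eqref{eq:Q-update}, expand each term via Bellman recursion so that at every $(h,s)$ it is controlled by the weighted max/min regrets
\[
R_h^{\max}(s)\coloneqq \max_{\mu^\star\in\Delta_{\mathcal A}}\sum_{t=1}^T \alpha_T^t \bigl\langle \mu^\star - \mu_h^t(\cdot\mid s),\; [Q_h^t\nu_h^t](s,\cdot)\bigr\rangle, \qquad R_h^{\min}(s)\text{ analogously},
\]
plus an estimation-error term between $Q_h^t$ and $Q_h^\star$ that is itself propagated by the Bellman update.

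Second, I would apply the standard RVU bound for weighted OFTRL at each $(h,s)$, which yields an upper bound on $R_h^{\max}(s)$ of the form $\frac{\log A}{\eta\alpha_T^1} + \eta\sum_t \alpha_T^t\bigl\|[Q_h^t\nu_h^t-Q_h^{t-1}\nu_h^{t-1}](s,\cdot)\bigr\|_\infty^2 - \frac{1}{8\eta}\sum_t \alpha_T^t\bigl\|\mu_h^t(\cdot\mid s)-\mu_h^{t-1}(\cdot\mid s)\bigr\|_1^2$, and symmetrically for $R_h^{\min}(s)$. Adding the two, the key move is to invoke Lemma~\ref{lemma:sum_regret_lb}: the approximate non-negativity $R_h^{\max}(s)+R_h^{\min}(s)\ge -(\text{estimation error of }Q_h^\star)$ lets the positive ``value-variation'' term be absorbed into the negative ``policy-variation'' term, yielding an inequality that bounds the weighted second-order policy path length at horizon $h$ by (a) a term of order $\log(AB)/(\eta\alpha_T^1)$, and (b) the variation of $[Q_h^t\nu_h^t]$ minus the approximate slack. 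Expanding $Q_h^t-Q_h^{t-1}$ through~\eqref{eq:Q-update} further rewrites (b) in terms of the policy path lengths at horizon $h+1$ and the $Q$-increments at horizon $h+1$.

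Third, I would run an induction on $h$ from $H$ down to $1$. The inductive hypothesis is that, at horizon $h+1$, the $\alpha$-weighted second-order policy path lengths of both players are $O(T^{-2})$ with the correct $H$-dependent constants. Feeding this into the previous paragraph gives the corresponding bound at horizon $h$, provided one can control sums of the form $\sum_{i\le t} (\alpha_t^i)^2/w_i$ and $\sum_i w_i(\alpha_T^i)^2$ without a $\log T$ loss, which is precisely what Lemma~\ref{lemma:our-lemma} delivers. Once the path-length bound closes at every $h$, summing the per-horizon contributions and the $\log(AB)/(\eta\alpha_T^1)$ slack, and finally plugging $\eta=C_\eta H^{-2}$ together with $\alpha_T^1=\Theta(H/T^{H+1})^{-1}$... more precisely $\alpha_T^1 = (H+1)/\prod_{t=2}^T (t/(H+t))$ simplified via standard telescoping to something of order $H/T$, yields the stated $\NEgap(\hat\mu,\hat\nu)\le 320 C_\eta^{-1} H^5\log(AB)/T$.

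The hard part will be the induction step: I need to decompose the variation of $Q_h^t\nu_h^t$ cleanly into a policy piece $\nu_h^t-\nu_h^{t-1}$ at horizon $h$ and a $Q$-piece that, through the Bellman update, pulls in policy differences at horizon $h+1$ weighted by $\alpha$'s and $w$'s. Making these weighted sums telescope nicely so that the inductive bound at horizon $h$ has the same form (not merely the same scaling up to logs) as at horizon $h+1$ is exactly what Lemma~\ref{lemma:our-lemma} is engineered to enable; any weaker algebraic estimate would either accumulate an extra $\log T$ per horizon or break the induction outright. Throughout, the $Q$-estimation error on the right side of Lemma~\ref{lemma:sum_regret_lb} must itself be controlled by the inductive quantity at horizon $h+1$, closing the loop.
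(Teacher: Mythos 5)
Your skeleton matches the paper's: the same decomposition of $\NEgap$ into per-horizon regrets plus $Q$-estimation error (Lemma~\ref{lemma:NEGap}), the same RVU-type bounds (Lemma~\ref{lemma:regret-bound}), the same use of Lemma~\ref{lemma:sum_regret_lb} to extract a bound on the second-order path lengths, and the same reliance on Lemma~\ref{lemma:our-lemma} to keep the backward induction free of $\log T$. The gap is in the induction itself, which is the heart of the argument. You induct on the claim that the $\alpha$-weighted second-order path lengths at horizon $h+1$ are $O(T^{-2})$; this is both the wrong quantity and quantitatively false: combining Lemma~\ref{lemma:sum_regret_lb} with the bound~(\ref{eq:sum-regret-bound-with-negative}) only yields $\sum_{i}\alpha_{t}^{i}\|\mu_{h}^{i}(\cdot\mid s)-\mu_{h}^{i-1}(\cdot\mid s)\|_{1}^{2}\lesssim H^{2}\log(AB)/t+H^{-1}\sum_{i}\alpha_{t}^{i}\delta_{h}^{i}$, i.e., order $1/t$, so an inductive step predicated on $O(T^{-2})$ cannot close. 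The quantity the induction must run on is the estimation error $\delta_{h}^{t}=\|Q_{h}^{t}-Q_{h}^{\star}\|_{\infty}$, whose backward propagation is governed by the recursion $\delta_{h}^{t}\le\sum_{i}\alpha_{t}^{i}\delta_{h+1}^{i}+\mathrm{reg}_{h+1}^{t}$ in~(\ref{eq:recursion-delta}); the path lengths appear only as an intermediate device to upgrade the bound on the \emph{sum} of the two regrets to a bound on the \emph{maximum}~(\ref{eq:max-regret-bound}), which is what that recursion requires. Relatedly, your plan to expand $Q_{h}^{t}-Q_{h}^{t-1}$ through the Bellman update into horizon-$(h+1)$ policy differences is unnecessary and much more delicate than what is needed: the crude bound $\|Q_{h}^{i}-Q_{h}^{i-1}\|_{\infty}\le\alpha_{i}H$ already suffices, and all coupling across horizons is handled by the $\delta$-recursion.

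Two further inaccuracies. First, Lemma~\ref{lemma:our-lemma} is not used to control sums like $\sum_{i}(\alpha_{t}^{i})^{2}/w_{i}$; it is applied to $\sum_{i}\alpha_{t}^{i}\cdot(1/i)$, showing that the $\alpha$-weighted average of the inductive $O(1/i)$ bounds on $\delta_{h+1}^{i}$ is again $O(1/t)$ with only a $(1+1/H)$ loss per horizon, so the constants compound to $e^{2}$ rather than $(\log T)^{H}$. Second, your asymptotics for $\alpha_{T}^{1}$ are wrong: $\alpha_{T}^{1}=\prod_{j=2}^{T}\frac{j-1}{H+j}$ decays like $T^{-(H+1)}$ up to $H$-dependent constants, not like $H/T$, and the leading term of the weighted regret is $\alpha_{T}\log A/\eta\le 2H\log A/(\eta T)$, not $\log A/(\eta\alpha_{T}^{1})$. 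These last points are fixable bookkeeping, but the induction hypothesis as you state it would fail.
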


Several remarks on Theorem~\ref{thm:main} are in order. First, Theorem~\ref{thm:main}
demonstrates that OFTRL can find an $O(T^{-1})$-approximate Nash
equilibrium in $T$ iterations. This improves the $\tilde{O}(T^{-5/6})$
rate proved in the prior work~\cite{zhang2022policy}, and also matches
the empirical evidence provided therein. While the paper by \cite{zhang2022policy}
also provides a modified OFTRL algorithm that achieves an $\tilde{O}(T^{-1})$
rate by maintaining two separate value estimators (one for the max-player
and the other for the min-player), the OFTRL algorithm studied herein
is more natural and also computationally simpler. Second, this rate
is nearly unimprovable even in the simpler two-player zero-sum normal-form
games~\cite{daskalakis2011near}. It is also worth
pointing out that algorithms with $\tilde{O}(T^{-1})$ rate have been
proposed in the literature~\cite{cen2021fast,zhao2021provably}.
However, compared to those algorithms, OFTRL does not require one
to approximately solve a normal-form game in each iteration. 
Lastly, Theorem~\ref{thm:main} allows any $C_\eta\in (0,1/8]$ while $C_\eta=1/8$ is optimal for the bound on $\NEgap$.

Before embarking on the formal proof, we would like to immediately
provide an overview of our proof techniques. 

\paragraph*{Step 1: controlling $\NEgap$ using the sum of regrets and estimation
error. }

In the simpler normal-form game (i.e., without any state transition
dynamics as in Markov games), it is well known that $\NEgap$ is controlled
by the sum of the regrets of the two players. This would also be the
case for Markov games if in the policy update~(\ref{subeq:policy-update})
by OFTRL, we use the true $Q$-function $Q_{h}^{\star}$ instead of
the estimate $Q_{h}^{t}$. As a result, intuitively, the $\NEgap$
in Markov games should be controlled by both the sum of the regrets
of the two players and also the estimation error $\|Q_{h}^{t}-Q_{h}^{\star}\|_{\infty}$;
see Lemma~\ref{lemma:NEGap}. 

\paragraph*{Step 2: bounding the sum of regrets. }

Given the extensive literature on regret guarantees for optimistic
algorithms~\cite{anagnostides2022last,anagnostides2022uncoupled,zhang2022policy},
it is relatively easy to control the sum of the regrets to obtain
the desired $O(T^{-1})$ rate; see Lemma~\ref{lemma:regret-bound}.
The key is to exploit the stability in the loss vectors. 

\paragraph*{Step 3: bounding estimation error. }

It then boils down to controlling the estimation error $\|Q_{h}^{t}-Q_{h}^{\star}\|_{\infty}$,
in which our main technical contributions lie. Due to the nature of
the Bellman update~(\ref{eq:Q-update}), it is not hard to obtain
a recursive relation for the estimation error; see the recursion~(\ref{eq:recursion-delta}).
However, the undesirable part is that the estimation error depends
on the \emph{maximal} regret between the two players, instead of the
\emph{sum} of the regrets. This calls for technical innovation. Inspired
by the work of~\cite{anagnostides2022last,anagnostides2022uncoupled}
in normal-form games, we make an important observation that the sum
of the regrets is approximately non-negative. In particular, the sum
is lower bounded by the negative estimation error~$\|Q_{h}^{t}-Q_{h}^{\star}\|_{\infty}$;
see Lemma~\ref{lemma:sum_regret_lb}. This lower bound together with
the upper bound in Step 2 allows us to control the maximal regret
via the estimation error~(\ref{eq:max-regret-bound}), which further
yields a recursive relation~(\ref{eq:delta-recursion}) involving estimation errors only.
Solving the recursion leads to the desired result. 

\section{Proof of Theorem~\ref{thm:main}}

\label{sec:Proof-of-main-Theorem}In this section, we present the
proof of our main result, i.e., Theorem~\ref{thm:main}. We first
define a few useful notations. For each step $h\in[H]$, each state
$s\in\mathcal{S}$, and each iteration $t\in[T]$, we define the state-wise
weighted individual regret as \begin{subequations}

\begin{align}
\mathrm{reg}_{h,1}^{t}(s) & \coloneqq\max_{\mu^{\dagger}\in\Delta_{\mathcal{A}}}\sum_{i=1}^{t}\alpha_{t}^{i}\left\langle \mu^{\dagger}-\mu_{h}^{i},Q_{h}^{i}\nu_{h}^{i}\right\rangle (s),\label{def:regret-max}\\
\mathrm{reg}_{h,2}^{t}(s) & \coloneqq\max_{\nu^{\dagger}\in\Delta_{\mathcal{B}}}\sum_{i=1}^{t}\alpha_{t}^{i}\left\langle \nu_{h}^{i}-\nu^{\dagger},(Q_{h}^{i})^{\top}\mu_{h}^{i}\right\rangle (s).\label{def:regret-min}
\end{align}
\end{subequations}We also define the maximal regret as 
\[
\mathrm{reg}_{h}^{t}\coloneqq\max_{s\in\mathcal{S}}\max_{i=1,2}\left\{ \mathrm{reg}_{h,i}^{t}(s)\right\} ,
\]
that maximizes over the players and the states. In addition, for each
step $h\in[H]$, and each iteration $t\in[T]$, we define the estimation
error of the $Q$-function as 
\[
\delta_{h}^{t}\coloneqq\|Q_{h}^{t}-Q_{h}^{\star}\|_{\infty}.
\]

With these notations in place, we first connect the $\NEgap$ with
the sum of regrets $\mathrm{reg}_{h,1}^{T}(s)+\mathrm{reg}_{h,2}^{T}(s)$
as well as the estimation error $\delta_{h}^{t}$. 

\begin{lemma}\label{lemma:NEGap}

One has 
\begin{align*}
\NEgap(\hat{\mu},\hat{\nu}) & \le2\sum_{h=1}^{H}\left\{ \max_{s}\left\{ \mathrm{reg}_{h,1}^{T}(s)+\mathrm{reg}_{h,2}^{T}(s)\right\} +2\sum_{t=1}^{T}\alpha_{T}^{t}\delta_{h}^{t}\right\} .
\end{align*}

\end{lemma}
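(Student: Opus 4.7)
The plan is to write
\[
\NEgap(\hat\mu,\hat\nu) = \bigl[V_1^{\dagger,\hat\nu}(s_1) - V_1^\star(s_1)\bigr] + \bigl[V_1^\star(s_1) - V_1^{\hat\mu,\dagger}(s_1)\bigr]
\]
and bound each bracketed piece by an identical quantity, so the factor $2$ in the Lemma arises from summing the two halves. Both pieces are non-negative (since $\nu^\star$ minimizes $V^{\dagger,\nu}$ and $\mu^\star$ maximizes $V^{\mu,\dagger}$ over Markov policies by the Nash property), which lets me freely take $\|\cdot\|_\infty$ over $\mathcal{S}$ in what follows. By symmetry I focus on the max-player side, and aim to establish, at each horizon $h$, the one-step recursion
\[
V_h^{\dagger,\hat\nu}(s) - V_h^\star(s) \le \max_{s'}\bigl[\mathrm{reg}_{h,1}^T(s') + \mathrm{reg}_{h,2}^T(s')\bigr] + 2\sum_{t=1}^T \alpha_T^t \delta_h^t + \|V_{h+1}^{\dagger,\hat\nu} - V_{h+1}^\star\|_\infty,
\]
so that unrolling from $h=H$ down to $h=1$, using the terminal condition $V_{H+1}^{\dagger,\hat\nu} \equiv V_{H+1}^\star \equiv 0$, yields $\|V_1^{\dagger,\hat\nu} - V_1^\star\|_\infty \le \sum_{h=1}^H\{ \max_{s}(\mathrm{reg}_{h,1}^T(s) + \mathrm{reg}_{h,2}^T(s)) + 2\sum_t\alpha_T^t\delta_h^t\}$, i.e., exactly half of the target bound.

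To prove the recursion, I start from the Bellman identity $V_h^{\dagger,\hat\nu}(s) = \max_\mu[\mu^\top(r_h + \mathbb{P}_h V_{h+1}^{\dagger,\hat\nu})\hat\nu_h](s)$. Splitting $V_{h+1}^{\dagger,\hat\nu} = V_{h+1}^\star + (V_{h+1}^{\dagger,\hat\nu} - V_{h+1}^\star)$ and using $Q_h^\star = r_h + \mathbb{P}_h V_{h+1}^\star$ together with the fact that $\mathbb{P}_h$ contracts the $\|\cdot\|_\infty$-norm reduces matters to bounding the ``normal-form'' residual $\max_\mu[\mu^\top Q_h^\star \hat\nu_h](s) - V_h^\star(s)$. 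Next I expand $\hat\nu_h = \sum_t \alpha_T^t \nu_h^t$ and substitute $Q_h^\star$ by $Q_h^t$ at a cost of $\sum_t \alpha_T^t \delta_h^t$; this surfaces, by the very definition of $\mathrm{reg}_{h,1}^T(s)$, the max-player regret plus the weighted equilibrium value $\hat V_h(s) := \sum_t \alpha_T^t [\mu_h^{t\top} Q_h^t \nu_h^t](s)$.

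What remains is to show $\hat V_h(s) - V_h^\star(s) \le \mathrm{reg}_{h,2}^T(s) + \sum_t \alpha_T^t \delta_h^t$. I invoke the definition of $\mathrm{reg}_{h,2}^T(s)$ with the comparator $\nu^\dagger = \nu_h^\star$ to get $\hat V_h(s) \le \mathrm{reg}_{h,2}^T(s) + \sum_t \alpha_T^t [\mu_h^{t\top} Q_h^t \nu_h^\star](s)$, make a second $Q_h^t \to Q_h^\star$ substitution (this is the second source of the coefficient $2$ in front of $\sum_t \alpha_T^t \delta_h^t$), and close the estimate with the Nash optimality $[\mu^\top Q_h^\star \nu_h^\star](s) \le \max_\mu[\mu^\top Q_h^\star \nu_h^\star](s) = V_h^\star(s)$. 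The only substantive step is recognizing that the ``normal-form'' residual $\hat V_h - V_h^\star$ must be controlled via the \emph{min}-player's regret against the equilibrium policy $\nu_h^\star$, rather than the max-player's; everything else is routine manipulation of Bellman equations and the regret definitions.
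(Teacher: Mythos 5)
Your proof is correct and arrives at the stated bound with the exact constants. At its core it is the same argument as the paper's---a horizon-wise performance-difference decomposition, followed by a swap of $Q_h^\star$ for $Q_h^t$ at a cost of $2\sum_t\alpha_T^t\delta_h^t$ per step, and an identification of the remaining stage-game duality gap with $\mathrm{reg}_{h,1}^T(s)+\mathrm{reg}_{h,2}^T(s)$---but you execute it differently in two respects. First, the paper imports the inequality $\NEgap(\hat\mu,\hat\nu)\le 2\sum_h\max_s\max_{\mu^\dagger,\nu^\dagger}\left[\langle\mu^\dagger,Q_h^\star\hat\nu_h\rangle-\langle\nu^\dagger,Q_h^{\star\top}\hat\mu_h\rangle\right](s)$ wholesale from Lemma~C.1 of \cite{zhang2022policy}, whereas you re-derive its content from the Bellman equations via the one-step recursion in $\max_s\bigl(V_h^{\dagger,\hat\nu}(s)-V_h^\star(s)\bigr)$, correctly using that this difference is pointwise non-negative so that the recursion unrolls with the sup-norm. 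Second, where the paper identifies the regret sum through the identity $\mathrm{reg}_{h,1}^T(s)+\mathrm{reg}_{h,2}^T(s)=\max_{\mu^\dagger,\nu^\dagger}\sum_t\alpha_T^t\left[\langle\mu^\dagger,Q_h^t\nu_h^t\rangle-\langle\nu^\dagger,(Q_h^t)^\top\mu_h^t\rangle\right](s)$ (the realized-payoff terms cancel between the two regrets under the joint maximization), you bound the residual $\hat V_h(s)-V_h^\star(s)$ by lower-bounding the min-player's regret at the specific comparator $\nu_h^\star$ and performing a second $Q$-swap; this is where your second $\sum_t\alpha_T^t\delta_h^t$ arises, and it lands on the same constant. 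Your version is self-contained and makes the mechanism of the horizon recursion explicit, at the cost of being longer; the paper's is shorter only because the heavy lifting is outsourced to the cited lemma.
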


\noindent See Section~\ref{sec:NEGap} for the proof of this lemma. 

\medskip

It then boils down to controlling $\max_{s}\left\{ \mathrm{reg}_{h,1}^{T}(s)+\mathrm{reg}_{h,2}^{T}(s)\right\} $
and $\sum_{t=1}^{T}\alpha_{T}^{t}\delta_{h}^{t}$. The following two
lemmas provide such control.

\begin{lemma}\label{lemma:regret-bound} For every $h\in[H]$, every
$s\in\mathcal{S}$, and every iteration $t\in[T]$, one has \begin{subequations}\label{subeq:regret-upper-bound}
\begin{align}
\mathrm{reg}_{h,1}^{t}(s) & \le\frac{2H\cdot(\log A)}{\eta t}+\frac{16\eta H^{3}}{t}+2\eta H^{2}\sum_{i=2}^{t}\alpha_{t}^{i}\|\nu_{h}^{i}(\cdot\mid s)-\nu_{h}^{i-1}(\cdot\mid s)\|_{1}^{2}\label{eq:regret-max}\\
 & \quad-\frac{1}{8\eta}\sum_{i=2}^{t}\alpha_{t}^{i-1}\|\mu_{h}^{i}(\cdot\mid s)-\mu_{h}^{i-1}(\cdot\mid s)\|_{1}^{2};\nonumber \\
\mathrm{reg}_{h,2}^{t}(s) & \le\frac{2H\cdot(\log B)}{\eta t}+\frac{16\eta H^{3}}{t}+2\eta H^{2}\sum_{i=2}^{t}\alpha_{t}^{i}\|\mu_{h}^{i}(\cdot\mid s)-\mu_{h}^{i-1}(\cdot\mid s)\|_{1}^{2}\label{eq:regret-min}\\
 & \quad-\frac{1}{8\eta}\sum_{i=2}^{t}\alpha_{t}^{i-1}\|\nu_{h}^{i}(\cdot\mid s)-\nu_{h}^{i-1}(\cdot\mid s)\|_{1}^{2}.\nonumber 
\end{align}
\end{subequations}As a result, when $\eta=C_{\eta}H^{-2}$ for some
constant $C_{\eta}\le1/8$, one has 
\begin{align}
\max_{s}\left\{ \mathrm{reg}_{h,1}^{t}(s)+\mathrm{reg}_{h,2}^{t}(s)\right\} \leq\frac{3C_{\eta}^{-1}H^{3}\cdot\log(AB)}{t}-4\eta H^{3}\sum_{i=2}^{t}\alpha_{t}^{i} & \Big(\|\mu_{h}^{i}(\cdot\mid s)-\mu_{h}^{i-1}(\cdot\mid s)\|_{1}^{2}\label{eq:sum-regret-bound-with-negative}\\
 & \quad+\|\nu_{h}^{i}(\cdot\mid s)-\nu_{h}^{i-1}(\cdot\mid s)\|_{1}^{2}\Big).\nonumber 
\end{align}
\end{lemma}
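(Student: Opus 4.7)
The plan is to apply a standard weighted-OFTRL regret bound state-by-state, substitute in the smoothness of the value update (\ref{eq:Q-update}), and then collapse the positive and negative path-length terms using the algebraic structure of $\{\alpha_t^i\}$.

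First, I would view the max-player update (\ref{eq:max-update}) as weighted optimistic FTRL with weights $\{w_i\}$, entropic regularizer, loss vectors $L_h^i \coloneqq [Q_h^i \nu_h^i](s,\cdot)$, and predictor $M^t = L_h^{t-1}$. The standard RVU bound for OFTRL, together with $1$-strong convexity of KL w.r.t.\ $\|\cdot\|_1$, yields
\[
\sum_{i=1}^t w_i\langle \mu^\dagger-\mu_h^i, L_h^i\rangle(s) \leq \tfrac{\log A}{\eta} + \eta\sum_{i=2}^t w_i\|L_h^i-L_h^{i-1}\|_\infty^2 - \tfrac{1}{8\eta}\sum_{i=2}^t w_{i-1}\|\mu_h^i-\mu_h^{i-1}\|_1^2.
\]
Multiplying by $\alpha_t^1$ and using $\alpha_t^i=\alpha_t^1 w_i$ converts the $w$-sums to $\alpha$-sums, while the standard bound $\alpha_t^1\leq 2H/t$ (from the product form $\alpha_t^1=\prod_{j=2}^t(j-1)/(H+j)$) supplies the $2H(\log A)/(\eta t)$ term.

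Second, I would split the loss-variation term via the triangle inequality
\[
\|Q_h^i\nu_h^i - Q_h^{i-1}\nu_h^{i-1}\|_\infty \leq \|Q_h^i\|_\infty\|\nu_h^i-\nu_h^{i-1}\|_1 + \|Q_h^i-Q_h^{i-1}\|_\infty \leq H\|\nu_h^i-\nu_h^{i-1}\|_1 + 2H\alpha_i,
\]
using $\|Q_h^i\|_\infty\leq H$ and the fact that (\ref{eq:Q-update}) is a convex combination with step $\alpha_i$ between two quantities of sup-norm $\leq H$. Squaring via $(a+b)^2\leq 2a^2+2b^2$, plugging back in, and using the standard identity $\sum_{i=1}^t\alpha_t^i\alpha_i^2\leq 2H/t$ (which follows from $\sum_i\alpha_t^i\alpha_i\leq (H+1)/t$ and $\alpha_i\leq 1$) delivers the $16\eta H^3/t$ contribution and the positive path-length term $2\eta H^2\sum_i\alpha_t^i\|\nu_h^i-\nu_h^{i-1}\|_1^2$; this completes (\ref{eq:regret-max}). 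A symmetric argument applied to (\ref{eq:min-update}) gives (\ref{eq:regret-min}).

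For (\ref{eq:sum-regret-bound-with-negative}), I would add (\ref{eq:regret-max}) and (\ref{eq:regret-min}) and substitute $\eta=C_\eta H^{-2}$ with $C_\eta\leq 1/8$. The two logarithmic terms combine into $2C_\eta^{-1}H^3\log(AB)/t$ and the two $16\eta H^3/t$ terms into $32 C_\eta H/t$, which together fit under $3C_\eta^{-1}H^3\log(AB)/t$. The main obstacle is the path-length bookkeeping: the combined positive contribution reads $2\eta H^2\sum_i\alpha_t^i(\|\Delta\mu_h^i\|_1^2+\|\Delta\nu_h^i\|_1^2)$ while the negative one carries $\alpha_t^{i-1}$ instead of $\alpha_t^i$. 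To close this gap I would use the explicit ratio $\alpha_t^{i-1}/\alpha_t^i=(i-1)/(H+i-1)$ derived from $\alpha_j=(H+1)/(H+j)$, and check that with $\eta\leq 1/(8H^2)$ the pointwise inequality $(2\eta H^2+4\eta H^3)\alpha_t^i \leq (1/(8\eta))\alpha_t^{i-1}$ holds for every $i\geq 2$. This is exactly enough to absorb the positive term and leave the advertised $-4\eta H^3\sum_i\alpha_t^i(\ldots)$ residual; the constraint $C_\eta\leq 1/8$ is essentially tight at this step, which is where the main subtlety of the proof lies.
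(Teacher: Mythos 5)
Your overall route is the same as the paper's: apply the weighted-OFTRL (RVU-type) regret bound state-by-state, split the loss variation into a policy-increment part and a $Q$-increment part controlled by the smoothness $\|Q_h^i-Q_h^{i-1}\|_\infty\lesssim \alpha_i H$ of the value update, and then absorb the positive path length into the negative one using the ratio $\alpha_t^{i-1}/\alpha_t^i$. The final combination step for (\ref{eq:sum-regret-bound-with-negative}) is essentially verbatim the paper's argument.

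However, there is a genuine gap in how you reach the $16\eta H^3/t$ term. Your auxiliary claim $\sum_{i=1}^t\alpha_t^i\alpha_i\le (H+1)/t$ is false: for $H=1$, $t=4$ one has $\alpha_4^i=2i/20$ and $\sum_i\alpha_4^i\alpha_i = 0.1+0.2\cdot\tfrac23+0.3\cdot\tfrac12+0.4\cdot\tfrac25\approx 0.543>0.5$. So your derivation of $\sum_i\alpha_t^i\alpha_i^2\le 2H/t$ does not stand; the paper instead uses the rearrangement-type Property 6 of Lemma~\ref{lemma:alpha-properties} with the non-increasing sequence $b_i=\alpha_i^2$, together with $\sum_i\alpha_i^2\le H+2$, to get $\sum_i\alpha_t^i\alpha_i^2\le 3H/t$. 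Moreover, your bound $\|Q_h^i-Q_h^{i-1}\|_\infty\le 2H\alpha_i$ carries an extra factor of $4$ after squaring compared with the paper's $\alpha_i H$ (which exploits that both $Q_h^{i-1}$ and the Bellman target are entrywise in $[0,H]$, so the norm of their difference is at most the maximum of the two norms). Combining your constant with the provable $3H/t$ gives $8\eta H^2\cdot 3H/t=24\eta H^3/t$, which overshoots the stated $16\eta H^3/t$; you also never account for the $i=1$ term of the path-length sum (where $Q_h^0=0$ and the predictor vanishes), which the paper extracts separately and which consumes part of that same budget. None of this breaks the downstream argument --- a larger absolute constant would simply propagate into Theorem~\ref{thm:main} --- but as written your constants do not close for the lemma as stated, and one of your intermediate inequalities is simply not true. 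A last, more cosmetic point: the leading term of the weighted-OFTRL bound should be $w_t\log A/\eta$ (the regularizer over the final learning rate $\eta/w_t$), not $\log A/\eta$; after multiplying by $\alpha_t^1$ this still yields $\alpha_t\log A/\eta\le 2H\log A/(\eta t)$, so the conclusion is unaffected, but the displayed intermediate inequality is off by the factor $w_t$.
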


\noindent See Section~\ref{sec:regret-bound} for the proof of this
lemma. 

\medskip

\begin{lemma}\label{lemma:delta-t-h-bound}Choosing $\eta=C_{\eta}H^{-2}$
for some constant $C_{\eta}\le1/8$, for all $h\in[H]$ and $t\in[T]$,
we have that
\[
\delta_{h}^{t}\leq\frac{5e^{2}C_{\eta}^{-1}H^{4}\cdot\log(AB)}{t}.
\]
\end{lemma}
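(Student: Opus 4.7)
I would prove the bound by backward induction on $h$ from $H{+}1$ down to $1$, with trivial base case $\delta_{H+1}^t\equiv 0$ and inductive hypothesis $\delta_{h+1}^t\le c_{h+1}/t$, where the constant $c_{h+1}$ must be tracked so that iterating across the $H$ layers does not blow up. The heart of the argument is to derive a one-step recursion for $\delta_h^t$ in terms of $\{\delta_{h+1}^i\}_{i\le t}$ with an $O(1/t)$ overhead, and then to solve that recursion using a sharp algebraic identity for the $\alpha_t^i$ weights.

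To obtain the recursion, I would unroll the weighted Bellman update~\eqref{eq:Q-update} with $Q_h^0\equiv 0$, using $\sum_{i=1}^t\alpha_t^i=1$ (which holds because $\alpha_1=1$), and compare against $Q_h^\star = r_h + \mathbb{P}_h V_{h+1}^\star$. This yields
\[
\delta_h^t \;\le\; \max_s\Big|\sum_{i=1}^t \alpha_t^i\big([(\mu_{h+1}^i)^\top Q_{h+1}^i\nu_{h+1}^i](s) - V_{h+1}^\star(s)\big)\Big|.
\]
Using the saddle-point identities $V_{h+1}^\star(s)=\max_\mu[\mu^\top Q_{h+1}^\star\nu_{h+1}^\star](s)=\min_\nu[(\mu_{h+1}^\star)^\top Q_{h+1}^\star\nu](s)$, substituting $Q_{h+1}^\star = Q_{h+1}^i + (Q_{h+1}^\star-Q_{h+1}^i)$, and matching the resulting weighted inner products to the regret definitions~\eqref{def:regret-max}--\eqref{def:regret-min} produces
\[
\delta_h^t \;\le\; \max_s\max_{j\in\{1,2\}}\mathrm{reg}_{h+1,j}^t(s) \;+\; 2\sum_{i=1}^t\alpha_t^i\,\delta_{h+1}^i.
\]

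The delicate step is to upgrade the upper bound on the \emph{sum} of the two regrets (Lemma~\ref{lemma:regret-bound}) to a bound on the \emph{maximum}. Here I would invoke the approximate non-negativity of Lemma~\ref{lemma:sum_regret_lb}, which lower-bounds $\mathrm{reg}_{h+1,1}^t(s)+\mathrm{reg}_{h+1,2}^t(s)$ by $-O\big(\sum_i\alpha_t^i\delta_{h+1}^i\big)$. Combined with the upper bound~\eqref{eq:sum-regret-bound-with-negative}, whose overhead carries a \emph{negative} quadratic stability term $-c\eta^{-1}\sum_i\alpha_t^i(\|\mu_{h+1}^i-\mu_{h+1}^{i-1}\|_1^2+\|\nu_{h+1}^i-\nu_{h+1}^{i-1}\|_1^2)$, this pins down the stability sum by $O(\eta H^3\log(AB)/t) + O(\eta\sum_i\alpha_t^i\delta_{h+1}^i)$; feeding this back into the positive $2\eta H^2\|\cdot\|_1^2$ term of the \emph{individual} regret bounds~\eqref{eq:regret-max}--\eqref{eq:regret-min} yields $\max_s\max_j\mathrm{reg}_{h+1,j}^t(s) \le O(H^3\log(AB)/(C_\eta t)) + O(\sum_i\alpha_t^i\delta_{h+1}^i)$, and hence the clean recursion
\[
\delta_h^t \;\le\; \frac{C_1 H^3\log(AB)}{C_\eta\, t} + C_2\sum_{i=1}^t\alpha_t^i\,\delta_{h+1}^i
\]
for absolute constants $C_1,C_2$. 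The particular stepsize constraint $C_\eta\le 1/8$ is exactly what forces the negative-stability contribution to dominate the positive one here.

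Finally, I would close the induction by substituting $\delta_{h+1}^i\le c_{h+1}/i$ and evaluating $\sum_{i=1}^t\alpha_t^i/i$ using the sharp algebraic inequality of Lemma~\ref{lemma:our-lemma}, which bounds this sum by $O(1/t)$ \emph{without} any extra $\log t$ factor. The induction then closes with $c_h\le C_1 H^3\log(AB)/C_\eta + O(1)\cdot c_{h+1}$, which unrolls to $c_1 = O(H^4\log(AB)/C_\eta)$ after $H$ layers, matching the claimed bound up to the explicit constants $5e^2$. The main obstacle is precisely the sharpness of Lemma~\ref{lemma:our-lemma}: had the algebraic inequality produced even one extra $\log t$ factor, compounding it across the $H$ induction layers would introduce a $(\log T)^H$ blow-up and destroy the $O(1/T)$ rate; a secondary technicality is the quadratic-stability cancellation outlined above, which is precisely what makes $C_\eta=1/8$ optimal in Theorem~\ref{thm:main}.
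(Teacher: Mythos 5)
Your overall architecture matches the paper's: a backward recursion for $\delta_h^t$ in terms of the level-$(h+1)$ regret and $\sum_i\alpha_t^i\delta_{h+1}^i$, the use of Lemma~\ref{lemma:sum_regret_lb} together with the negative stability terms in \eqref{eq:sum-regret-bound-with-negative} to convert the sum-of-regrets bound into a max-of-regrets bound, and an induction on $h$ closed by Lemma~\ref{lemma:our-lemma}. However, there is a genuine quantitative gap that would make your proof fail as written. You correctly observe that an extra $\log t$ in Lemma~\ref{lemma:our-lemma} would compound to $(\log T)^H$, but the identical compounding applies to the \emph{multiplicative constant} in front of $\sum_{i}\alpha_t^i\delta_{h+1}^i$: if the one-step recursion is $c_h\le a+C_2'\,c_{h+1}$, it unrolls to $c_1\approx a\,(C_2')^{H}$, which is $O(Ha)$ only when $C_2'=1+O(1/H)$, and is exponential in $H$ for any constant $C_2'\ge 1+c$ with $c>0$. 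Your proposal has $C_2'\ge 2$ twice over: your Bellman recursion carries $2\sum_i\alpha_t^i\delta_{h+1}^i$ (the paper's version, via Lemma C.2 of \cite{zhang2022policy}, has coefficient $1$ --- only one swap of $Q^i$ for $Q^\star$ is needed), and your max-regret bound carries a generic ``$O(\sum_i\alpha_t^i\delta_{h+1}^i)$'' where the paper obtains the crucial coefficient $\tfrac1H$ in \eqref{eq:max-regret-bound}. With these constants your final recursion $c_h\le C_1H^3\log(AB)/C_\eta+O(1)\cdot c_{h+1}$ does \emph{not} unroll to $O(H^4\log(AB)/C_\eta)$; the claim ``after $H$ layers'' hides an exponential blow-up.

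Concretely, the paper's bookkeeping is: the stability sum is bounded by $\tfrac{3C_\eta^{-1}H^2\log(AB)}{2t}+\tfrac1H\sum_i\alpha_t^i\delta_h^i$ (the $\tfrac1H$ arising from dividing the $2\sum_i\alpha_t^i\delta_h^i$ of Lemma~\ref{lemma:sum_regret_lb} by the $4\eta H^3$ coefficient of the negative stability term and multiplying by $2\eta H^2$), so that \eqref{eq:delta-recursion} reads $\delta_h^t\le(1+\tfrac1H)\sum_i\alpha_t^i\delta_{h+1}^i+\tfrac{5C_\eta^{-1}H^3\log(AB)}{t}$; combined with the factor $(1+\tfrac1H)$ from Lemma~\ref{lemma:our-lemma}, each layer multiplies by $(1+\tfrac1H)^2$, and $(1+\tfrac1H)^{2H}\le e^2$ is where the constant $5e^2$ comes from. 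To repair your proof you must track both occurrences of the coefficient down to $1+O(1/H)$; the ideas you list are the right ones, but the entire difficulty of this lemma lives in exactly the constants you have left as $O(1)$.
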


\noindent See Section~\ref{sec:delta-t-h} for the proof of this
lemma. 

\medskip

Combine Lemmas~\ref{lemma:regret-bound}-\ref{lemma:delta-t-h-bound}
with Lemma~\ref{lemma:NEGap} to arrive at the desired conclusion
that when $\eta=C_{\eta}H^{-2}$ for some constant $C_{\eta}\le1/8$,
\begin{align*}
\NEgap(\hat{\mu},\hat{\nu}) & \leq2\sum_{h=1}^{H}\left\{ \max_{s}\left\{ \mathrm{reg}_{h,1}^{T}(s)+\mathrm{reg}_{h,2}^{T}(s)\right\} +2\sum_{t=1}^{T}\alpha_{T}^{t}\delta_{h}^{t}\right\} \\
 & \leq2\sum_{h=1}^{H}\left\{ \frac{3C_{\eta}^{-1}H^{3}\cdot\log(AB)}{T}+2\sum_{t=1}^{T}\alpha_{T}^{t}\frac{5e^{2}C_{\eta}^{-1}H^{4}\cdot\log(AB)}{t}\right\} \\
 & \leq2H\cdot\left\{ \frac{3C_{\eta}^{-1}H^{3}\cdot\log(AB)}{T}+\frac{20e^{2}C_{\eta}^{-1}H^{4}\cdot\log(AB)}{T}\right\} \\
 & \le\frac{320C_{\eta}^{-1}H^{5}\cdot\log(AB)}{T},
\end{align*}
where the penultimate inequality uses the following important lemma
we have alluded to before. 

\begin{lemma}\label{lemma:our-lemma} For all $t\geq1$, one has
\begin{align}
\sum_{i=1}^{t}\alpha_{t}^{i}\cdot\frac{1}{i} & \le\left(1+\frac{1}{H}\right)\frac{1}{t}.\label{eq:our-result}
\end{align}
\end{lemma}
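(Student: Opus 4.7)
The plan is to prove Lemma~\ref{lemma:our-lemma} by induction on $t$. The crux is the one-step recursive identity
\[
\alpha_t^i = (1-\alpha_t)\,\alpha_{t-1}^i \quad \text{for } 1 \le i \le t-1, \qquad \alpha_t^t = \alpha_t,
\]
which is immediate from the definition by peeling off the $j=t$ factor in the product $\prod_{j=i+1}^{t}(1-\alpha_j)$. Letting $S(t) \coloneqq \sum_{i=1}^{t}\alpha_{t}^{i}/i$, this identity collapses the double structure into the clean scalar recursion
\[
S(t) \;=\; (1-\alpha_t)\,S(t-1) \;+\; \frac{\alpha_t}{t}.
\]

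For the base case $t=1$ one has $\alpha_1 = 1$, so $S(1) = 1 \le (H+1)/H$. For the inductive step, suppose $S(t-1) \le \frac{H+1}{H(t-1)}$. Using $\alpha_t = (H+1)/(H+t)$ and $1 - \alpha_t = (t-1)/(H+t)$, I would plug into the recursion and simplify:
\[
S(t) \;\le\; \frac{t-1}{H+t}\cdot\frac{H+1}{H(t-1)} + \frac{1}{t}\cdot\frac{H+1}{H+t} \;=\; \frac{H+1}{H+t}\left(\frac{1}{H}+\frac{1}{t}\right) \;=\; \frac{H+1}{Ht},
\]
which closes the induction.

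I do not anticipate a real obstacle; the argument is essentially one line of algebra per step. The one thing worth flagging is that the inductive step is tight: after applying the induction hypothesis and simplifying, the inequality becomes equality, so the constant $1+1/H$ in the target bound is exactly what the recursion propagates. A weaker target such as $1/t$ would not close, and this is precisely the ``$\log T$ factor shaved off'' described in the introduction. A more computational alternative, should one wish to avoid induction, is to telescope $\prod_{j=i+1}^t(1-\alpha_j) = \prod_{j=i+1}^t \frac{j-1}{H+j}$ to obtain the closed form $\alpha_t^i = (H+1)\,(t-1)!\,(H+i-1)!/((i-1)!\,(H+t)!)$, then apply the hockey stick identity $\sum_{i=1}^t \binom{H+i-1}{i} = \binom{H+t}{t} - 1$ to evaluate $S(t)$ directly; the resulting closed form shows that the bound $(1+1/H)/t$ holds with slack exactly $(H+1)(H-1)!(t-1)!/(H+t)!$.
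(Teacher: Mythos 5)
Your proof is correct and is essentially identical to the paper's: the same one-step identity $\alpha_{t}^{i}=(1-\alpha_{t})\alpha_{t-1}^{i}$ yields the same scalar recursion, the same base case, and the same algebra closing the induction (the paper merely indexes the step as $t\to t+1$ rather than $t-1\to t$). The closed-form/hockey-stick alternative you sketch is a nice extra but not needed.
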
 

\noindent On the surface, this lemma shaves an extra $\log t$ factor
from a simple average of the sequence $\left\{ 1/i\right\} _{i\le t}$
(cf.~Lemma~A.3 in the paper by~\cite{zhang2022policy}). But more
importantly, it shines in the ensuing proof of Lemma~\ref{lemma:delta-t-h-bound}
by enabling the induction step. See Section~\ref{sec:Proof-of-our-Lemma}
for the proof of Lemma~\ref{lemma:our-lemma}, and see the end of
Section~\ref{sec:delta-t-h} for the comment on the benefit of this
improved result. 

\subsection{Proof of Lemma~\ref{lemma:NEGap} \label{sec:NEGap}}

Invoke Lemma C.1 in the paper~\cite{zhang2022policy} to obtain 
\begin{align*}
\NEgap(\hat{\mu},\hat{\nu}) & =V_{1}^{\dagger,\hat{\nu}}(s_{1})-V_{1}^{\star}(s_{1})+V_{1}^{\star}(s_{1})-V_{1}^{\hat{\mu},\dagger}(s_{1})\\
 & \leq2\sum_{h=1}^{H}\max_{s}\left\{ \max_{\mu^{\dagger},\nu^{\dagger}}\left[\left\langle \mu^{\dagger},Q_{h}^{\star}\hat{\nu}_{h}\right\rangle -\left\langle \nu^{\dagger},Q_{h}^{\star\top}\hat{\mu}_{h}\right\rangle \right](s)\right\} .
\end{align*}
By the definition of the output policy $(\hat{\mu},\hat{\nu})$, one
has
\[
\max_{\mu^{\dagger},\nu^{\dagger}}\left[\left\langle \mu^{\dagger},Q_{h}^{\star}\hat{\nu}_{h}\right\rangle -\left\langle \nu^{\dagger},Q_{h}^{\star\top}\hat{\mu}_{h}\right\rangle \right](s)=\max_{\mu^{\dagger},\nu^{\dagger}}\sum_{t=1}^{T}\alpha_{T}^{t}\left[\left\langle \mu^{\dagger},Q_{h}^{\star}\nu_{h}^{t}\right\rangle -\left\langle \nu^{\dagger},Q_{h}^{\star\top}\mu_{h}^{t}\right\rangle \right](s).
\]
Replacing the true value function $Q_{h}^{\star}$ with the value
estimate $Q_{h}^{t}$ yields 
\[
\max_{\mu^{\dagger},\nu^{\dagger}}\left[\left\langle \mu,Q_{h}^{\star}\hat{\nu}_{h}\right\rangle -\left\langle \nu^{\dagger},(Q_{h}^{\star})^{\top}\hat{\mu}_{h}\right\rangle \right](s)\leq\max_{\mu^{\dagger},\nu^{\dagger}}\sum_{t=1}^{T}\alpha_{T}^{t}\left[\left\langle \mu^{\dagger},Q_{h}^{t}\nu_{h}^{t}\right\rangle -\left\langle \nu^{\dagger},(Q_{h}^{t})^{\top}\mu_{h}^{t}\right\rangle \right](s)+2\sum_{t=1}^{T}\alpha_{T}^{t}\delta_{h}^{t},
\]
where we recall $\delta_{h}^{t}=\|Q_{h}^{t}-Q_{h}^{\star}\|_{\infty}$.
The proof is finished by taking the above three relations together
with the observation that
\[
\mathrm{reg}_{h,1}^{T}(s)+\mathrm{reg}_{h,2}^{T}(s)=\max_{\mu^{\dagger},\nu^{\dagger}}\sum_{t=1}^{T}\alpha_{T}^{t}\left[\left\langle \mu^{\dagger},Q_{h}^{t}\nu_{h}^{t}\right\rangle -\left\langle \nu^{\dagger},(Q_{h}^{t})^{\top}\mu_{h}^{t}\right\rangle \right](s).
\]

\subsection{Proof of Lemma~\ref{lemma:regret-bound} \label{sec:regret-bound}}

We prove the regret bound for the max-player (i.e., bound~(\ref{eq:regret-max})).
The bound~(\ref{eq:regret-min}) for the min-player can be obtained
via symmetry. 

First, we make the observation that, the policy update in Algorithm~\ref{alg:main}
for the max-player is exactly the OFTRL algorithm (i.e., Algorithm
4 in the paper \cite{zhang2022policy}) with the loss vector $g_{t}=w_{t}[Q_{h}^{t}\nu_{h}^{t}](s,\cdot)$,
the recency bias $M_{t}=w_{t}[Q_{h}^{t-1}\nu_{h}^{t-1}](s,\cdot)$,
and a learning rate $\eta_{t}=\eta/w_{t}$. Therefore, we can apply
Lemma B.3 from \cite{zhang2022policy} to obtain

\begin{align}
\mathrm{reg}_{h,1}^{t}(s) & =\max_{\mu^{\dagger}}\sum_{i=1}^{t}\alpha_{t}^{i}\left\langle \left(\mu^{\dagger}-\mu_{h}^{i}\right),Q_{h}^{i}\nu_{h}^{i}\right\rangle (s)\nonumber \\
 & =\alpha_{t}^{1}\max_{\mu^{\dagger}}\sum_{i=1}^{t}w_{i}\left\langle \left(\mu^{\dagger}-\mu_{h}^{i}\right),Q_{h}^{i}\nu_{h}^{i}\right\rangle (s)\nonumber \\
 & \le\frac{\alpha_{t}\cdot(\log A)}{\eta}+\underbrace{\alpha_{t}^{1}\sum_{i=1}^{t}\frac{\eta}{w_{i}}\left\Vert \left[w_{i}Q_{h}^{i}\nu_{h}^{i}-w_{i}Q_{h}^{i-1}\nu_{h}^{i-1}\right](s,\cdot)\right\Vert _{\infty}^{2}}_{\eqqcolon\mathrm{Err}_{1}}\label{eq:master-bound}\\
 & \quad-\underbrace{\alpha_{t}^{1}\sum_{i=2}^{t}\frac{w_{i-1}}{8\eta}\|\mu_{h}^{i}(\cdot\mid s)-\mu_{h}^{i-1}(\cdot\mid s)\|_{1}^{2}}_{\eqqcolon\mathrm{Err_{2}}},
\end{align}
where we have used the fact that $w_{i}=\alpha_{t}^{i}/\alpha_{t}^{1}$.
We now move on to bound the term $\mathrm{Err}_{1}$. Use $(a+b)^{2}\leq2a^{2}+2b^{2}$
to see that 
\begin{align*}
\left\Vert \left[Q_{h}^{i}\nu_{h}^{i}-Q_{h}^{i-1}\nu_{h}^{i-1}\right](s,\cdot)\right\Vert _{\infty}^{2} & \le2\left\Vert \left[Q_{h}^{i}\nu_{h}^{i}-Q_{h}^{i-1}\nu_{h}^{i}\right](s,\cdot)\right\Vert _{\infty}^{2}+2\left\Vert \left[Q_{h}^{i-1}\nu_{h}^{i}-Q_{h}^{i-1}\nu_{h}^{i-1}\right](s,\cdot)\right\Vert _{\infty}^{2}\\
 & \le2\|Q_{h}^{i}-Q_{h}^{i-1}\|_{\infty}^{2}+2H^{2}\|\nu_{h}^{i}(\cdot\mid s)-\nu_{h}^{i-1}(\cdot\mid s)\|_{1}^{2},
\end{align*}
where the second line uses Holder's inequality and the fact that $\|Q_{h}^{i-1}\|_{\infty}\leq H$.
In view of the update rule~(\ref{eq:Q-update}) for the $Q$-function,
we further have
\begin{align*}
\|Q_{h}^{i}-Q_{h}^{i-1}\|_{\infty} & =\left\Vert -\alpha_{i}Q_{h}^{i-1}+\alpha_{i}\left(r_{h}+\mathbb{P}_{h}\left[(\mu_{h+1}^{i})^{\top}Q_{h+1}^{i}\nu_{h+1}^{i}\right]\right)\right\Vert _{\infty}\\
 & \le\alpha_{i}\max\left\{ \left\Vert Q_{h}^{i-1}\right\Vert _{\infty},\left\Vert r_{h}+\mathbb{P}_{h}\left[(\mu_{h+1}^{i})^{\top}Q_{h+1}^{i}\nu_{h+1}^{i}\right]\right\Vert _{\infty}\right\} \\
 & \le\alpha_{i}H.
\end{align*}
As a result, we arrive at the bound 
\begin{align*}
\mathrm{Err_{1}} & \le2\eta\alpha_{t}^{1}\sum_{i=1}^{t}w_{i}\left(\alpha_{i}^{2}H^{2}+H^{2}\|\nu_{h}^{i}(\cdot\mid s)-\nu_{h}^{i-1}(\cdot\mid s)\|_{1}^{2}\right)\\
 & =2\eta H^{2}\sum_{i=1}^{t}\alpha_{t}^{i}\alpha_{i}^{2}+2\eta H^{2}\sum_{i=1}^{t}\alpha_{t}^{i}\|\nu_{h}^{i}(\cdot\mid s)-\nu_{h}^{i-1}(\cdot\mid s)\|_{1}^{2},
\end{align*}
where we again use the relation $w_{i}=\alpha_{t}^{i}/\alpha_{t}^{1}$.
Since $\{\alpha_{i}\}_{i\le t}$ is decreasing in $i$, we can apply
Property 6 in Lemma~\ref{lemma:alpha-properties} to obtain 
\[
\sum_{i=1}^{t}\alpha_{t}^{i}\alpha_{i}^{2}\leq\frac{1}{t}\sum_{i=1}^{t}\alpha_{i}^{2}\leq\frac{H+2}{t}\leq\frac{3H}{t},
\]
where the second inequality follows from Property 5 in Lemma~\ref{lemma:alpha-properties}.
In all, we see that 
\begin{equation}
\mathrm{Err}_{1}\leq\frac{6\eta H^{3}}{t}+2\eta H^{2}\sum_{i=1}^{t}\alpha_{t}^{i}\left\Vert \nu_{h}^{i}(\cdot\mid s)-\nu_{h}^{i-1}(\cdot\mid s)\right\Vert _{1}^{2}.\label{eq:Err-1}
\end{equation}
Substitute the upper bound~(\ref{eq:Err-1}) for $\mathrm{Err}_{1}$
into the master bound~(\ref{eq:master-bound}) to obtain 
\begin{align*}
\mathrm{reg}_{h,1}^{t}(s) & \le\frac{\alpha_{t}\cdot(\log A)}{\eta}+\mathrm{Err_{1}}-\mathrm{Err_{2}}\\
 & \le\frac{2H\cdot(\log A)}{\eta t}+\frac{6\eta H^{3}}{t}+2\eta H^{2}\sum_{i=1}^{t}\alpha_{t}^{i}\|\nu_{h}^{i}(\cdot\mid s)-\nu_{h}^{i-1}(\cdot\mid s)\|_{1}^{2}\\
 & \quad-\frac{1}{8\eta}\sum_{i=2}^{t}\alpha_{t}^{i-1}\|\mu_{h}^{i}(\cdot\mid s)-\mu_{h}^{i-1}(\cdot\mid s)\|_{1}^{2},
\end{align*}
where in the first inequality we use $\alpha_{t}=(H+1)/(H+t)\le2H/t$.
Since $\|\nu_{h}^{i}(\cdot\mid s)-\nu_{h}^{i-1}(\cdot\mid s)\|_{1}\le2$
and $\alpha_{t}^{1}\le1/t$ (see Property 2 of Lemma~\ref{lemma:alpha-properties}),
we can take the term $i=1$ out and reach
\[
\mathrm{reg}_{h,1}^{t}(s)\le\frac{2H\cdot(\log A)}{\eta t}+\frac{16\eta H^{3}}{t}+2\eta H^{2}\sum_{i=2}^{t}\alpha_{t}^{i}\|\nu_{h}^{i}(\cdot\mid s)-\nu_{h}^{i-1}(\cdot\mid s)\|_{1}^{2}-\frac{1}{8\eta}\sum_{i=2}^{t}\alpha_{t}^{i-1}\|\mu_{h}^{i}(\cdot\mid s)-\mu_{h}^{i-1}(\cdot\mid s)\|_{1}^{2}.
\]
This finishes the proof of the regret bound~(\ref{eq:regret-max})
for the max-player. The bound~(\ref{eq:regret-min}) for the min-player
can be obtained via symmetry. 

Combine the two bounds~(\ref{eq:regret-max}) and (\ref{eq:regret-min})
see that 
\begin{align}
\mathrm{reg}_{h,1}^{t}(s)+\mathrm{reg}_{h,2}^{t}(s) & \le\frac{2H\cdot\log(AB)}{\eta t}+\frac{32\eta H^{3}}{t}\nonumber \\
 & \quad+\sum_{i=2}^{t}\left(2\eta H^{2}\alpha_{t}^{i}-\frac{\alpha_{t}^{i-1}}{8\eta}\right)\left(\|\mu_{h}^{i}(\cdot\mid s)-\mu_{h}^{i-1}(\cdot\mid s)\|_{1}^{2}+\|\nu_{h}^{i}(\cdot\mid s)-\nu_{h}^{i-1}(\cdot\mid s)\|_{1}^{2}\right).\label{eq:sum-regret-bound-raw}
\end{align}
When $\eta\le1/(8H^{2})$, one has 
\[
2\eta H^{2}\alpha_{t}^{i}-\frac{\alpha_{t}^{i-1}}{8\eta}\le2\eta H^{3}\alpha_{t}^{i}-\frac{\alpha_{t}^{i-1}}{8\eta}\le-4\eta H^{3}\alpha_{t}^{i},
\]
where we have used Property 3 of Lemma~\ref{lemma:alpha-properties},
i.e., $\alpha_{t}^{i-1}/\alpha_{t}^{i}\ge1/H$. Consequently, with
$\eta=C_{\eta}H^{-2}$ for some constant $C_{\eta}\le1/8$, the bound~(\ref{eq:sum-regret-bound-raw})
reads 
\begin{align*}
\max_{s}\left\{ \mathrm{reg}_{h,1}^{t}(s)+\mathrm{reg}_{h,2}^{t}(s)\right\} \leq\frac{3C_{\eta}^{-1}H^{3}\cdot\log(AB)}{t}-4\eta H^{3}\sum_{i=2}^{t}\alpha_{t}^{i} & \Big(\|\mu_{h}^{i}(\cdot\mid s)-\mu_{h}^{i-1}(\cdot\mid s)\|_{1}^{2}\\
 & \quad+\|\nu_{h}^{i}(\cdot\mid s)-\nu_{h}^{i-1}(\cdot\mid s)\|_{1}^{2}\Big),
\end{align*}
where we assume the choice of players is non-trivial, i.e., $AB\ge2$. 

\subsection{Proof of Lemma~\ref{lemma:delta-t-h-bound} \label{sec:delta-t-h}}

By Lemma C.2 in the paper~\cite{zhang2022policy}, for any $h\in[H-1]$,
we have the recursive relation 
\begin{equation}
\delta_{h}^{t}\le\sum_{i=1}^{t}\alpha_{t}^{i}\delta_{h+1}^{i}+\mathrm{reg}_{h+1}^{t},\label{eq:recursion-delta}
\end{equation}
where we recall $\mathrm{reg}_{h+1}^{t}=\max_{s}\max_{i=1,2}\{\mathrm{reg}_{h+1,i}^{t}(s)\}$. 

\paragraph{Step 1: Bounding $\mathrm{reg}_{h+1}^{t}$. }

In view of this recursion~(\ref{eq:recursion-delta}), one needs
to control the maximal regret $\mathrm{reg}_{h+1}^{t}$ over the two
players. Lemma~\ref{lemma:regret-bound} provides us with precise
control of the individual regrets $\mathrm{reg}_{h,1}^{t}(s)$ and
$\mathrm{reg}_{h,2}^{t}(s)$: \begin{subequations}
\begin{align}
\mathrm{reg}_{h,1}^{t}(s) & \le\frac{3C_{\eta}^{-1}H^{3}\cdot(\log AB)}{t}+2\eta H^{2}\sum_{i=2}^{t}\alpha_{t}^{i}\|\nu_{h}^{i}(\cdot\mid s)-\nu_{h}^{i-1}(\cdot\mid s)\|_{1}^{2},\label{eq:reg_mu_ub}\\
\mathrm{reg}_{h,2}^{t}(s) & \le\frac{3C_{\eta}^{-1}H^{3}\cdot(\log AB)}{t}+2\eta H^{2}\sum_{i=2}^{t}\alpha_{t}^{i}\|\mu_{h}^{i}(\cdot\mid s)-\mu_{h}^{i-1}(\cdot\mid s)\|_{1}^{2},\label{eq:reg_nu_ub}
\end{align}
\end{subequations}where we have substituted $\eta=C_{\eta}H^{-2}$
for $C_{\eta}\le1/8$ and $AB\ge2$. We have also ignored the negative
terms on the right hand sides of (\ref{eq:regret-max}) and (\ref{eq:regret-min}).
Therefore, to control individual regrets, it suffices to bound the
second-order path lengths $2\eta H^{2}\sum_{i=2}^{t}\alpha_{t}^{i}\|\mu_{h}^{i}(\cdot\mid s)-\mu_{h}^{i-1}(\cdot\mid s)\|_{1}^{2}$
and $2\eta H^{2}\sum_{i=2}^{t}\alpha_{t}^{i}\|\nu_{h}^{i}(\cdot\mid s)-\nu_{h}^{i-1}(\cdot\mid s)\|_{1}^{2}$.
To this end, the following lemma proves crucial, whose proof is deferred
to the end of this section. 

\begin{lemma}\label{lemma:sum_regret_lb}For each $t,h$ and $s$,
one has
\[
\mathrm{reg}_{h,1}^{t}(s)+\mathrm{reg}_{h,2}^{t}(s)\ge-2\sum_{i=1}^{t}\alpha_{t}^{i}\delta_{h}^{i}.
\]
\end{lemma}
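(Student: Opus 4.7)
The plan is to reduce the sum of regrets to an analogous quantity in which all the time-varying iterates $Q_h^i$ have been replaced by the fixed matrix $Q_h^{\star}$, pay for this replacement via the estimation errors $\delta_h^i$, and then observe that the resulting ``normal-form'' quantity is non-negative by a standard minimax argument.

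First I would rewrite the sum of regrets in a single max--min form. Using the identity $\langle \mu_h^i, Q_h^i \nu_h^i\rangle(s) = \langle \nu_h^i, (Q_h^i)^{\top}\mu_h^i\rangle(s)$, the ``centering'' terms cancel and we obtain
\[
\mathrm{reg}_{h,1}^{t}(s)+\mathrm{reg}_{h,2}^{t}(s)
 = \max_{\mu^{\dagger}\in\Delta_{\mathcal{A}}}\sum_{i=1}^{t}\alpha_{t}^{i}\langle \mu^{\dagger}, Q_h^i \nu_h^i\rangle(s)
 - \min_{\nu^{\dagger}\in\Delta_{\mathcal{B}}}\sum_{i=1}^{t}\alpha_{t}^{i}\langle \nu^{\dagger}, (Q_h^i)^{\top}\mu_h^i\rangle(s).
\]

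Second, I would swap $Q_h^i$ for $Q_h^{\star}$ inside each inner product. For any fixed $\mu^{\dagger}\in\Delta_{\mathcal{A}}$, Hölder's inequality gives
\[
\bigl|\langle \mu^{\dagger},(Q_h^i - Q_h^{\star})\nu_h^i\rangle(s)\bigr|
\le \|\mu^{\dagger}\|_1\,\|(Q_h^i - Q_h^{\star})\nu_h^i\|_\infty
\le \|Q_h^i - Q_h^{\star}\|_\infty = \delta_h^i,
\]
and similarly on the min-player side. Taking the max (resp.\ min) over $\mu^{\dagger}$ (resp.\ $\nu^{\dagger}$) and writing $\bar{\mu}_h \coloneqq \sum_{i=1}^t \alpha_t^i \mu_h^i$ and $\bar{\nu}_h \coloneqq \sum_{i=1}^t \alpha_t^i \nu_h^i$ (which are probability distributions since $\sum_i \alpha_t^i = 1$), I get
\[
\mathrm{reg}_{h,1}^{t}(s)+\mathrm{reg}_{h,2}^{t}(s)
 \ge \max_{\mu^{\dagger}}\langle \mu^{\dagger}, Q_h^{\star}\bar{\nu}_h\rangle(s)
 - \min_{\nu^{\dagger}}\langle \nu^{\dagger}, Q_h^{\star\top}\bar{\mu}_h\rangle(s)
 - 2\sum_{i=1}^t \alpha_t^i \delta_h^i.
\]

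Third, I would argue that the leading ``normal-form'' gap is non-negative. Because $\bar{\mu}_h \in \Delta_{\mathcal{A}}$ and $\bar{\nu}_h \in \Delta_{\mathcal{B}}$, we have the sandwich
\[
\max_{\mu^{\dagger}}\langle \mu^{\dagger}, Q_h^{\star}\bar{\nu}_h\rangle(s)
 \ge \langle \bar{\mu}_h, Q_h^{\star}\bar{\nu}_h\rangle(s)
 \ge \min_{\nu^{\dagger}}\langle \nu^{\dagger}, Q_h^{\star\top}\bar{\mu}_h\rangle(s),
\]
so their difference is at least zero. Combining with the previous display yields exactly $\mathrm{reg}_{h,1}^{t}(s)+\mathrm{reg}_{h,2}^{t}(s) \ge -2\sum_{i=1}^{t}\alpha_t^i \delta_h^i$, as claimed.

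The only delicate point is Step 1, where one must notice that the two $\langle \mu_h^i, Q_h^i \nu_h^i\rangle(s)$ terms in $\mathrm{reg}_{h,1}^{t}(s)$ and $\mathrm{reg}_{h,2}^{t}(s)$ cancel so that the sum cleanly decouples into a max over $\mu^{\dagger}$ and a min over $\nu^{\dagger}$; once this is recognized, the rest is routine since Steps 2 and 3 are standard perturbation and minimax arguments. No obstacle beyond bookkeeping is expected.
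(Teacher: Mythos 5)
Your proposal is correct and follows essentially the same route as the paper: cancel the centering terms to write the regret sum as a single max/min over $(\mu^{\dagger},\nu^{\dagger})$, replace each $Q_h^i$ by $Q_h^{\star}$ at a cost of $2\sum_i\alpha_t^i\delta_h^i$, and lower bound the remaining normal-form gap by $0$ using the averaged policies $\bar{\mu}_h,\bar{\nu}_h$. Your sandwich through $\langle\bar{\mu}_h,Q_h^{\star}\bar{\nu}_h\rangle(s)$ is just a rephrasing of the paper's observation that plugging $\bar{\mu}_h,\bar{\nu}_h$ into the maximization yields exactly zero.
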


In words, Lemma~\ref{lemma:sum_regret_lb} reveals the approximate
non-negativity of the sum of the regrets. This together with the upper
bound~(\ref{eq:sum-regret-bound-with-negative}) in Lemma~\ref{lemma:regret-bound}
implies 
\[
2\eta H^{2}\sum_{i=2}^{t}\left(\alpha_{t}^{i}\|\mu_{h}^{i}(\cdot\mid s)-\mu_{h}^{i-1}(\cdot\mid s)\|_{1}^{2}+\|\nu_{h}^{i}(\cdot\mid s)-\nu_{h}^{i-1}(\cdot\mid s)\|_{1}^{2}\right)\le\frac{3C_{\eta}^{-1}H^{2}\cdot\log(AB)}{2t}+\frac{1}{H}\sum_{i=1}^{t}\alpha_{t}^{i}\delta_{h}^{i}.
\]
Feeding this back to (\ref{eq:reg_mu_ub}) and (\ref{eq:reg_nu_ub}),
we obtain 
\begin{equation}
\mathrm{reg}_{h}^{t}=\max_{s}\max_{i=1,2}\left\{ \mathrm{reg}_{h,i}^{t}(s)\right\} \le\frac{5C_{\eta}^{-1}H^{3}\cdot\log(AB)}{t}+\frac{1}{H}\sum_{i=1}^{t}\alpha_{t}^{i}\delta_{h}^{i}.\label{eq:max-regret-bound}
\end{equation}

\paragraph{Step 2: Bounding $\delta_{h}^{t}$. }

Substituting the maximal regret bound~(\ref{eq:max-regret-bound})
into the recursion~(\ref{eq:recursion-delta}), we arrive at
\begin{equation}
\delta_{h}^{t}\le\left(1+\frac{1}{H}\right)\sum_{i=1}^{t}\alpha_{t}^{i}\delta_{h+1}^{i}+\frac{5C_{\eta}^{-1}H^{3}\cdot\log(AB)}{t}.\label{eq:delta-recursion}
\end{equation}
We continue the proof of Lemma~\ref{lemma:delta-t-h-bound} via induction
on $h$. More precisely, we aim to inductively establish the claim
\begin{equation}
\delta_{h}^{t}\le\sum_{h'=h}^{H}\left(1+\frac{1}{H}\right)^{2(H-h')}\cdot\frac{5C_{\eta}^{-1}H^{3}\cdot\log(AB)}{t}.\label{eq:delta-induction-hypothesis}
\end{equation}
First note that the induction hypothesis holds naturally for $h=H$
as $\delta_{H}^{t}=0$ for all $1\leq t\leq T$. Now assume that the
induction hypothesis is true for some $2\leq h+1\leq H$ and for all
$1\leq t\leq T$. Our goal is to show that~(\ref{eq:delta-induction-hypothesis})
continues to hold for the previous step~$h$ and for all $1\leq t\leq T$.
By the recursion~(\ref{eq:delta-recursion}) and the induction hypothesis,
one has for any $1\leq t\leq T$: 
\begin{align*}
\delta_{h}^{t} & \le\left(1+\frac{1}{H}\right)\sum_{i=1}^{t}\alpha_{t}^{i}\delta_{h+1}^{i}+\frac{5C_{\eta}^{-1}H^{3}\cdot\log(AB)}{t}\\
 & \le\left(1+\frac{1}{H}\right)\sum_{i=1}^{t}\alpha_{t}^{i}\left(\sum_{h'=h+1}^{H}\left(1+\frac{1}{H}\right)^{2(H-h')}\cdot\frac{5C_{\eta}^{-1}H^{3}\cdot\log(AB)}{t}\right)+\frac{5C_{\eta}^{-1}H^{3}\cdot\log(AB)}{t}.
\end{align*}
Apply Lemma~\ref{lemma:our-lemma} to obtain 
\[
\sum_{i=1}^{t}\alpha_{t}^{i}\cdot\frac{5C_{\eta}^{-1}H^{3}\cdot\log(AB)}{i}\le\left(1+\frac{1}{H}\right)\frac{5C_{\eta}^{-1}H^{3}\cdot\log(AB)}{t}.
\]
This leads to the conclusion that 
\begin{align*}
\delta_{h}^{t} & \le\left(1+\frac{1}{H}\right)\sum_{h'=h+1}^{H}\left(1+\frac{1}{H}\right)^{2(H-h')}\left(1+\frac{1}{H}\right)\frac{5C_{\eta}^{-1}H^{3}\cdot\log(AB)}{t}+\frac{5C_{\eta}^{-1}H^{3}\cdot\log(AB)}{t}\\
 & =\sum_{h'=h+1}^{H}\left(1+\frac{1}{H}\right)^{2(H-h'+1)}\frac{5C_{\eta}^{-1}H^{3}\cdot\log(AB)}{t}+\frac{5C_{\eta}^{-1}H^{3}\cdot\log(AB)}{t}\\
 & =\sum_{h'=h}^{H}\left(1+\frac{1}{H}\right)^{2(H-h')}\frac{5C_{\eta}^{-1}H^{3}\cdot\log(AB)}{t}.
\end{align*}
This finishes the induction. 

This bound on $\delta_{h}^{t}$ can be further simplified by 
\begin{align*}
\delta_{h}^{t} & \le\sum_{h'=h}^{H}\left(1+\frac{1}{H}\right)^{2(H-h')}\cdot\frac{5C_{\eta}^{-1}H^{3}\cdot\log(AB)}{t}\\
 & \le H\left(1+\frac{1}{H}\right)^{2H}\cdot\frac{5C_{\eta}^{-1}H^{3}\cdot\log(AB)}{t}\\
 & \le\frac{5e^{2}C_{\eta}^{-1}H^{4}\cdot\log(AB)}{t}.
\end{align*}
 This finishes the proof, and we are left with proving Lemma~\ref{lemma:sum_regret_lb}. 

\paragraph{Proof of Lemma~\ref{lemma:sum_regret_lb}.}

Recall that
\[
\mathrm{reg}_{h,1}^{t}(s)+\mathrm{reg}_{h,2}^{t}(s)=\max_{\mu^{\dagger},\nu^{\dagger}}\sum_{i=1}^{t}\alpha_{t}^{i}\left[\left\langle \mu^{\dagger},Q_{h}^{i}\nu_{h}^{i}\right\rangle -\left\langle \nu^{\dagger},(Q_{h}^{i})^{\top}\mu_{h}^{i}\right\rangle \right](s).
\]
Replace the estimation $Q_{h}^{i}$ with $Q_{h}^{\star}$ to obtain
\begin{align*}
\mathrm{reg}_{h,1}^{t}(s)+\mathrm{reg}_{h,2}^{t}(s) & \ge\max_{\mu^{\dagger},\nu^{\dagger}}\left[\sum_{i=1}^{t}\alpha_{t}^{i}\left[\left\langle \mu^{\dagger},Q_{h}^{\star}\nu_{h}^{i}\right\rangle -\left\langle \nu^{\dagger},(Q_{h}^{\star})^{\top}\mu_{h}^{i}\right\rangle \right](s)\right.\\
 & \hfill\qquad\qquad\left.+\sum_{i=1}^{t}\alpha_{t}^{i}\left[\left\langle \mu^{\dagger},\left(Q_{h}^{i}-Q_{h}^{\star}\right)\nu_{h}^{i}\right\rangle -\left\langle \nu^{\dagger},\left(Q_{h}^{i}-Q_{h}^{\star}\right){}^{\top}\mu_{h}^{i}\right\rangle \right](s)\right].
\end{align*}
Lower bounding the term involving $Q_{h}^{i}-Q_{h}^{\star}$ yields
\[
\mathrm{reg}_{h,1}^{t}(s)+\mathrm{reg}_{h,2}^{t}(s)\ge\max_{\mu^{\dagger},\nu^{\dagger}}\left[\sum_{i=1}^{t}\alpha_{t}^{i}\left[\left\langle \mu^{\dagger},Q_{h}^{\star}\nu_{h}^{i}\right\rangle -\left\langle \nu^{\dagger},(Q_{h}^{\star})^{\top}\mu_{h}^{i}\right\rangle \right](s)\right]-2\sum_{i=1}^{t}\alpha_{t}^{i}\delta_{h}^{i}.
\]
where recall $\delta_{h}^{i}=\|Q_{h}^{i}-Q_{h}^{\star}\|_{\infty}$.
Now observe that $\sum_{i=1}^{t}\alpha_{t}^{i}\mu_{h}^{i}(\cdot\mid s)$
and $\sum_{i=1}^{t}\alpha_{t}^{i}\nu_{h}^{i}(\cdot\mid s)$ are valid
policies, which implies 
\begin{align*}
 & \max_{\mu^{\dagger},\nu^{\dagger}}\left[\sum_{i=1}^{t}\alpha_{t}^{i}\left[\left\langle \mu^{\dagger},Q_{h}^{\star}\nu_{h}^{i}\right\rangle -\left\langle \nu^{\dagger},(Q_{h}^{\star})^{\top}\mu_{h}^{i}\right\rangle \right](s)\right]\\
 & \quad=\max_{\mu^{\dagger},\nu^{\dagger}}\left[\left\langle \mu^{\dagger},Q_{h}^{\star}\left(\sum_{i=1}^{t}\alpha_{t}^{i}\nu_{h}^{i}\right)\right\rangle (s)-\left\langle \nu^{\dagger},Q_{h}^{\star\top}\left(\sum_{i=1}^{t}\alpha_{t}^{i}\mu_{h}^{i}\right)\right\rangle (s)\right]\\
 & \quad\ge\left\langle \left(\sum_{i=1}^{t}\alpha_{t}^{i}\mu_{h}^{i}\right),Q_{h}^{\star}\left(\sum_{i=1}^{t}\alpha_{t}^{i}\nu_{h}^{i}\right)\right\rangle (s)-\left\langle \left(\sum_{i=1}^{t}\alpha_{t}^{i}\nu_{h}^{i}\right),Q_{h}^{\star\top}\left(\sum_{i=1}^{t}\alpha_{t}^{i}\mu_{h}^{i}\right)\right\rangle (s)\\
 & \quad=0.
\end{align*}
Combine the above two inequalities to finish the proof. 

In the end, it is worth pointing out that without the improved inequality
in Lemma~\ref{lemma:our-lemma}, one would necessarily incur an extra
$\log T$ factor in each induction step. Consequently, the recursion
will fail due to the explosion at a rate of $(\log T)^{H}$. 

\subsection{Proof of Lemma~\ref{lemma:our-lemma}\label{sec:Proof-of-our-Lemma}}

We prove the claim via induction. The base case $t=1$ is true since
$\alpha_{1}^{1}\cdot1=1\le1+1/H$. Now assume that the inequality~(\ref{eq:our-result})
holds for some $t\geq1$, and we aim to prove that it continues to
hold at $t+1$. We first make the observation that for all $i\le t$
\begin{align*}
\alpha_{t+1}^{i} & =\alpha_{i}\prod_{j=i+1}^{t+1}(1-\alpha_{j})=(1-\alpha_{t+1})\alpha_{i}\prod_{j=i+1}^{t}(1-\alpha_{j})=(1-\alpha_{t+1})\alpha_{t}^{i}.
\end{align*}
This allows us to rewrite $\sum_{i=1}^{t+1}\alpha_{t+1}^{i}\cdot\frac{1}{i}$
as 
\begin{align*}
\sum_{i=1}^{t+1}\alpha_{t+1}^{i}\cdot\frac{1}{i} & =(1-\alpha_{t+1})\left(\sum_{i=1}^{t}\alpha_{t}^{i}\cdot\frac{1}{i}\right)+\alpha_{t+1}\cdot\frac{1}{t+1}\\
 & \leq(1-\alpha_{t+1})\left(1+\frac{1}{H}\right)\frac{1}{t}+\frac{\alpha_{t+1}}{t+1},
\end{align*}
where the second line follows from the induction hypothesis. Note
that $\alpha_{t+1}=\frac{H+1}{H+t+1}$. We can continue the derivation
as 
\begin{align*}
\sum_{i=1}^{t+1}\alpha_{t+1}^{i}\cdot\frac{1}{i} & \leq\left(1+\frac{1}{H}\right)\frac{t}{H+t+1}\cdot\frac{1}{t}+\frac{H+1}{H+t+1}\cdot\frac{1}{t+1}\\
 & =\left(1+\frac{1}{H}\right)\frac{t+1}{H+t+1}\cdot\frac{1}{t+1}+\left(1+\frac{1}{H}\right)\frac{H}{H+t+1}\cdot\frac{1}{t+1}\\
 & =\left(1+\frac{1}{H}\right)\frac{1}{t+1}.
\end{align*}
This finishes the proof. 

\section{Discussion}

In this paper, we prove that the optimistic-follow-the-regularized-leader
algorithm, together with smooth value updates, converges to an $O(T^{-1})$-approximate
Nash equilibrium in two-player zero-sum Markov games. This improves
the $\tilde{O}(T^{-5/6})$ rate proved in the paper~\cite{zhang2022policy}.
Quite a few interesting directions are open. Below we single out a
few of them. First, although our rate is unimprovable in the dependence
on $T$, it is likely sub-optimal in its dependence on the horizon
$H$. Improving such dependence and proving any sort of lower bound
on it are both interesting and important for finite-horizon Markov
games. Second, we focus on the simple two-player zero-sum games. It
is an important open question to see whether one can generalize the
proof technique herein to the multi-player general-sum Markov games
and to other solution concepts in games (e.g., coarse correlated equilibria,
and correlated equilibria). 

\bibliographystyle{alpha}
\bibliography{All-of-Bibs}

\appendix

\section{Properties of $\alpha_{t}^{i}$ \label{sec:Properties-alpha}}

This section collects a few useful properties of the sequences $\left\{ \alpha_{t}\right\} _{t\geq1}$
and $\{\alpha_{t}^{i}\}_{t\geq1,1\leq i\leq t}$. Some of these results
have appeared in prior work~\cite{jin2018q,zhang2022policy}. For
completeness, we include all the proofs here.

To help reading, we repeat the definitions below: for each $t\geq1$,
and $1\leq i\leq t$, we define \begin{subequations}

\begin{align}
\alpha_{t} & =\alpha_{t}^{t}=\frac{H+1}{H+t},\qquad\text{and}\label{eq:def-alpha-1}\\
\alpha_{t}^{i} & =\alpha_{i}\prod_{j=i+1}^{t}(1-\alpha_{j}).\label{eq:def-alpha-2}
\end{align}
\end{subequations}

\begin{lemma}\label{lemma:alpha-properties}Fix any $t\geq1$. The
following properties are true: 
\begin{enumerate}
\item The sequence $\{\alpha_{t}^{i}\}_{1\le i\le t}$ sums to 1, i.e.,
$\sum_{i=1}^{t}\alpha_{t}^{i}=1.$
\item For all $1\leq i\leq t$, one has $\alpha_{t}^{i}\le i/t$. 
\item For the relative weight defined by $w_{i}=\alpha_{t}^{i}/\alpha_{t}^{1}$
(note that this is the same for every $t\ge i$), we have
\[
\frac{w_{i}}{w_{i-1}}=\frac{\alpha_{t}^{i}}{\alpha_{t}^{i-1}}=\frac{H+i-1}{i-1}\le H.
\]
\item The sequence $\{\alpha_{t}^{i}\}_{1\le i\le t}$ is increasing in
$i$. 
\item On the sum of squares of the weights, we have 
\[
\sum_{i=1}^{t}(\alpha_{t}^{i})^{2}\le\sum_{i=1}^{t}\alpha_{i}^{2}\le H+2.
\]
\item For any non-increasing sequence $\left\{ b_{i}\right\} _{1\leq i\leq t}$,
one has
\begin{align*}
\sum_{i=1}^{t}\alpha_{t}^{i}b_{i} & \le\frac{1}{t}\sum_{i=1}^{t}b_{i}.
\end{align*}
\end{enumerate}
\end{lemma}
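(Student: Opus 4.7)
My plan is to derive all six properties from two elementary building blocks: the one-step recursion
\[
\alpha_t^i = (1-\alpha_t)\,\alpha_{t-1}^i \quad \text{for } 1 \le i \le t-1, \qquad \alpha_t^t = \alpha_t,
\]
together with the closed forms $\alpha_t = (H+1)/(H+t)$ and $1-\alpha_t = (t-1)/(H+t)$. The recursion itself follows by peeling off the $j=t$ factor from the product $\alpha_t^i = \alpha_i \prod_{j=i+1}^t (1-\alpha_j)$.

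Properties 1 and 2 both go by induction on $t$. For property 1 the inductive step is $\sum_{i=1}^t \alpha_t^i = \alpha_t + (1-\alpha_t)\sum_{i=1}^{t-1}\alpha_{t-1}^i = \alpha_t + (1-\alpha_t) = 1$. For property 2, substituting the closed form for $1-\alpha_t$ gives $\alpha_t^i = (1-\alpha_t)\,\alpha_{t-1}^i \le \tfrac{t-1}{H+t}\cdot\tfrac{i}{t-1} = \tfrac{i}{H+t} \le \tfrac{i}{t}$, with the boundary case $i = t$ covered by $\alpha_t \le 1$. Property 3 is a one-line calculation: $\alpha_t^i / \alpha_t^{i-1} = \alpha_i / [\alpha_{i-1}(1-\alpha_i)]$, and substituting the closed forms for $\alpha_{i-1}, \alpha_i, 1-\alpha_i$ collapses this ratio to $(H+i-1)/(i-1)$, from which the stated upper bound follows for $i \ge 2$. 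Property 4 is an immediate corollary, since this ratio is always at least $1$.

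For property 5, I would prove the first inequality by a second induction on $t$ applied to $\beta_t := \sum_{i=1}^t (\alpha_t^i)^2$. The recursion gives $\beta_t = \alpha_t^2 + (1-\alpha_t)^2 \beta_{t-1}$, and since $(1-\alpha_t)^2 \le 1$, combining with the inductive hypothesis yields $\beta_t \le \sum_{i=1}^t \alpha_i^2$. For the second inequality, I would isolate $\alpha_1^2 = 1$ and bound the tail via the telescoping estimate $\alpha_i^2 \le (H+1)^2 \bigl[1/(H+i-1) - 1/(H+i)\bigr]$ for $i \ge 2$ (using $(H+i)^2 \ge (H+i-1)(H+i)$), which sums to at most $(H+1)^2/(H+1) = H+1$, so altogether $\sum_{i=1}^t \alpha_i^2 \le 1 + (H+1) = H+2$. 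Property 6 I would deduce from Chebyshev's sum inequality: because $\{\alpha_t^i\}_i$ is non-decreasing by property 4 and $\{b_i\}_i$ is non-increasing, Chebyshev combined with property 1 gives $t\sum_i \alpha_t^i b_i \le \bigl(\sum_i \alpha_t^i\bigr)\bigl(\sum_i b_i\bigr) = \sum_i b_i$.

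The only delicate point is getting the constant in property 5 tight. Telescoping uniformly from $i=1$ produces $(H+1)^2/H = H + 2 + 1/H$, which overshoots the claimed $H+2$. Peeling off the $i=1$ term first and then telescoping only for $i \ge 2$ is precisely what makes the stated bound sharp; the rest of the lemma is essentially routine algebraic bookkeeping once the recursion $\alpha_t^i = (1-\alpha_t)\alpha_{t-1}^i$ is in hand.
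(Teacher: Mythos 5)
Your proposal is correct and complete, but it reaches several of the properties by a genuinely different route than the paper. You organize everything around the one-step recursion $\alpha_t^i=(1-\alpha_t)\alpha_{t-1}^i$ and induct on $t$: this is how you handle Properties 1, 2, and the first inequality of Property 5. The paper instead argues termwise and directly: Property 2 is obtained by writing $\prod_{j=i+1}^t(1-\alpha_j)=\prod_{j=i+1}^t\frac{j-1}{H+j}$ and cancelling all but the first numerator and last denominator to get $\alpha_t^i\le i/(H+t)$, and the first inequality of Property 5 follows in one line from $\alpha_t^i\le\alpha_i$ (the product of the $(1-\alpha_j)$ factors is at most $1$), with no induction needed. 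For Property 6 you invoke Chebyshev's sum inequality, which is valid here since $\{\alpha_t^i\}_i$ is non-decreasing, $\{b_i\}_i$ is non-increasing, and $\sum_i\alpha_t^i=1$; the paper instead proves this instance from scratch via a pivot index $i_0=\sup\{i:\alpha_t^i\le 1/t\}$ and the pointwise estimate $(\alpha_t^i-1/t)b_i\le(\alpha_t^i-1/t)b_{i_0}$, which is essentially a proof of Chebyshev's inequality in disguise. Your telescoping for the second inequality of Property 5, including the observation that one must peel off the $i=1$ term to land on $H+2$ rather than $H+2+1/H$, coincides with the paper's. One caveat you share with the paper: the exact ratio $\alpha_t^i/\alpha_t^{i-1}=(H+i-1)/(i-1)$, which you derive correctly, equals $H+1$ at $i=2$, so the concluding inequality ``$\le H$'' in Property 3 fails there and the correct uniform bound over $i\ge 2$ is $H+1$; this is a defect of the lemma statement itself (the paper omits the proof as ``trivial'') rather than of your derivation, but it is worth flagging rather than reproducing.
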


\begin{proof}Property 1 follows directly from the definitions of
$\left\{ \alpha_{t}^{i}\right\} _{1\le i\le t}$. 

Now we move on to Property 2. It trivially holds for $i=t$. Therefore
we focus on the case when $1\leq i\leq t-1$. By definition, we have 

\begin{align}
\alpha_{t}^{i} & =\alpha_{i}\prod_{j=i+1}^{t}(1-\alpha_{j})\le\prod_{j=i+1}^{t}(1-\alpha_{j})=\prod_{j=i+1}^{t}\frac{j-1}{H+j}.\label{eq:prop-2}
\end{align}
where the inequality holds since $\alpha_{i}\leq1$ for all $1\leq i\leq t$,
and the last relation is the definition of $\alpha_{j}$. Expanding
the right hand side of~(\ref{eq:prop-2}), we have 
\[
\alpha_{t}^{i}\leq\frac{i}{H+i+1}\times\frac{i+1}{H+i+2}\times\cdots\times\frac{t-1}{H+t}\leq\frac{i}{H+t},
\]
where we only keep the first numerator and the last denominator. Property
2 then follows.

Property 3 is trivial. Hence we omit the proof. In addition, Property
3 implies Property 4 since $\frac{\alpha_{t}^{i}}{\alpha_{t}^{i-1}}=\frac{H+i-1}{i-1}\geq1$. 

For Property 5, the first inequality holds since $0\leq\alpha_{i}\leq1$
for all $1\leq i\leq t$. For the second inequality, one has 

\begin{align*}
\sum_{i=1}^{t}\alpha_{i}^{2} & =1+\sum_{i=2}^{t}\left(\frac{H+1}{H+i}\right)^{2}\le1+(H+1)^{2}\sum_{i=2}^{t}\left(\frac{1}{(H+i-1)(H+i)}\right).
\end{align*}
Expanding this as a telescoping sum, we see that 

\begin{align*}
\sum_{i=1}^{t}\alpha_{i}^{2} & \le1+(H+1)^{2}\sum_{i=2}^{t}\left(\frac{1}{H+i-1}-\frac{1}{H+i}\right)\\
 & \le1+(H+1)^{2}\frac{1}{H+1}\\
 & =H+2.
\end{align*}

Lastly, for Property 6, we have 
\[
\sum_{i=1}^{t}\alpha_{t}^{i}b_{i}-\frac{1}{t}\sum_{i=1}^{t}b_{i}=\sum_{i=1}^{t}(\alpha_{t}^{i}-\frac{1}{t})b_{i}.
\]
Let $i_{0}\coloneqq\sup_{i}\left\{ \alpha_{t}^{i}\le1/t\right\} $.
Since $\{\alpha_{t}^{i}\}$ is increasing in $i$ (cf.~Property 4)
and $\sum_{i=1}^{t}\alpha_{t}^{i}=1$ (cf.~Property 1), we know that
$i_{0}$ is well defined, i.e., $1\leq i_{0}\leq t$. Since $\left\{ \alpha_{t}^{i}\right\} _{i\le t}$
(resp.~$\{b_{i}\}_{i\le t}$) is increasing (resp.~non-increasing),
we have $\alpha_{t}^{i}\le1/t$ and $b_{i}\ge b_{i_{0}}$ for all
$i\leq i_{0}$. As a result, we obtain $(\alpha_{t}^{i}-1/t)b_{i}\le(\alpha_{t}^{i}-1/t)b_{i_{0}}$
for all $i\le i_{0}$. Similarly, one has $\alpha_{t}^{i}>1/t$ and
$b_{i}\le b_{i_{0}}$ for all $i>i_{0}$, which implies $(\alpha_{t}^{i}-1/t)b_{i}\le(\alpha_{t}^{i}-1/t)b_{i_{0}}$
for all $i>i_{0}$. Take these two relations together to see that 

\begin{align*}
\sum_{i=1}^{t}(\alpha_{t}^{i}-1/t)b_{i} & \le\sum_{i=1}^{t}(\alpha_{t}^{i}-1/t)b_{i_{0}}=0,
\end{align*}
where the last equality uses the fact from Property 1, namely $\sum_{i=1}^{t}\alpha_{t}^{i}=1$. 

\end{proof}

\end{document}